\documentclass[preprint]{article}
% if you need to pass options to natbib, use, e.g.:
     \PassOptionsToPackage{numbers, compress}{natbib}

\usepackage{neurips_2025}

\usepackage[utf8]{inputenc} % allow utf-8 input
\usepackage[T1]{fontenc}    % use 8-bit T1 fonts
\usepackage{hyperref}       % hyperlinks
\usepackage{url}            % simple URL typesetting
\usepackage{booktabs}       % professional-quality tables
\usepackage{amsmath,amsthm,amsfonts}       % blackboard math symbols
\usepackage{nicefrac}       % compact symbols for 1/2, etc.
\usepackage{microtype}      % microtypography
\usepackage{xcolor}         % colors
\usepackage{graphicx}
\usepackage{subcaption}
\usepackage[normalem]{ulem}
\usepackage{multirow}
\usepackage{wrapfig}

\usepackage{caption}
\usepackage{booktabs}

\usepackage{algorithm}
\usepackage[noend]{algpseudocode}

\theoremstyle{plain}

\newtheorem{lemma}{Lemma}
\newtheorem{proposition}{Proposition}

\theoremstyle{definition}
\newtheorem{definition}{Definition}

\newtheorem{assumption}{Assumption}

\theoremstyle{remark}

\definecolor{Crimson}{rgb}{0.86, 0.08, 0.24}
\definecolor{Bittersweet}{HTML}{C04F17}
\newcommand{\rt}[1]{\textcolor{black}{#1}}
\newcommand{\lc}[1]{\textcolor{black}{#1}}

\title{Differentiable Generalized Sliced Wasserstein Plans}

\author{%
  Laetitia Chapel\thanks{equal contribution.} \\
  IRISA - L'Institut Agro Rennes-Angers\\
  Rennes, France.\\
  \texttt{laetitia.chapel@irisa.fr} \\
  % examples of more authors
   \And
   Romain Tavenard$^\thefootnote$ \\
   Université de Rennes 2 – IRISA \\
   Rennes, France.\\
   \texttt{romain.tavenard@univ-rennes2.fr} \\
   \AND
   Samuel Vaiter \\
  CNRS \& Université Côte d’Azur \\
  Laboratoire J. A. Dieudonné \\
  Nice, France.\\
  \texttt{samuel.vaiter@cnrs.fr} \\
  % \And
  % Coauthor \\
  % Affiliation \\
  % Address \\
  % \texttt{email} \\
  % \And
  % Coauthor \\
  % Affiliation \\
  % Address \\
  % \texttt{email} \\
}

\begin{document}

\maketitle

\begin{abstract}
Optimal Transport (OT) has attracted significant interest in the machine learning community, not only for its ability to define meaningful distances between probability distributions -- such as the Wasserstein distance -- but also for its formulation of OT plans. 
Its computational complexity remains a bottleneck, though, and slicing techniques have been developed to scale OT to large datasets. Recently, a novel slicing scheme, dubbed min-SWGG, lifts a single one-dimensional plan back to the original multidimensional space, finally selecting the slice that yields the lowest Wasserstein distance as an approximation of the full OT plan. Despite its computational and theoretical advantages, min-SWGG inherits typical limitations of slicing methods: (i) the number of required slices grows exponentially with the data dimension, and (ii) it is constrained to linear projections. Here, we reformulate  min-SWGG as a bilevel optimization problem and propose a differentiable approximation scheme to efficiently identify the optimal slice, even in high-dimensional settings. We furthermore define its generalized extension for accommodating data living on manifolds. Finally, we demonstrate the practical value of our approach in various applications, including gradient flows on manifolds and high-dimensional spaces, as well as a novel sliced OT-based conditional flow matching for image generation -- where fast computation of transport plans is essential.\looseness=-1
\end{abstract}

%%%%%%%%%%%%%%%%%%%%%%%%%%%%%%%%%%%%%%%%%%%%%%%%%%%%%%%%%%%%
\section{Introduction}
Optimal Transport (OT) has emerged as a foundational tool in modern machine learning, primarily due to its capacity to provide meaningful comparisons between probability distributions. Rooted in the seminal works of Monge~\cite{monge1781memoire} and Kantorovich~\cite{kantorovich1942translocation}, OT introduces a mathematically rigorous framework that defines distances, such as the Wasserstein distance, that have demonstrated high performance %compared to traditional divergences 
in various learning tasks. Used as a loss function, it is now the workhorse of learning problems ranging from classification, transfer learning or generative modelling, see \citep{montesuma2024recent} for a review. 
One of the key advantages of OT lies in its dual nature: it also constructs an optimal coupling or transport plan between distributions. This coupling reveals explicit correspondences between samples, enabling a wide range of applications. For example, OT has proven valuable in shape matching~\citep{bonneel2019spot}, colour transfer~\citep{rabin2010regularization, rabin2014adaptive}, domain adaptation~\cite{courty2016optimal}, and, more recently, in generative modeling through conditional flow matching~\cite{pooladian2023multisample, tong2023improving}, where it provides an alignment between the data distribution and samples from a prior.\looseness=-1

Despite the many successes of optimal transport in machine learning, computing OT plans remains a computationally challenging problem. The most used exact algorithms are drawn from linear programming, typically resulting in a $O(n^3)$ complexity with respect to the number of samples $n$.
To alleviate this issue, several strategies have been developed in the last decade, that can be roughly classified in three families: \emph{i)} regularization-based methods, such as entropic optimal transport~\citep{cuturi2013sinkhorn}, \emph{ii)} minibatch-based methods~\citep{genevay2018learning, fatras2020learning}, that average the outputs of
several smaller optimal transport problems and \emph{iii)} approximation-based through closed-form formulas such as projection-based OT.
This paper is concerned with this third class of methods, with the underlying goal to obtain a transport plan.
We quickly review them below.

\begin{figure}[t]
    \centering
    \includegraphics[width=\textwidth]{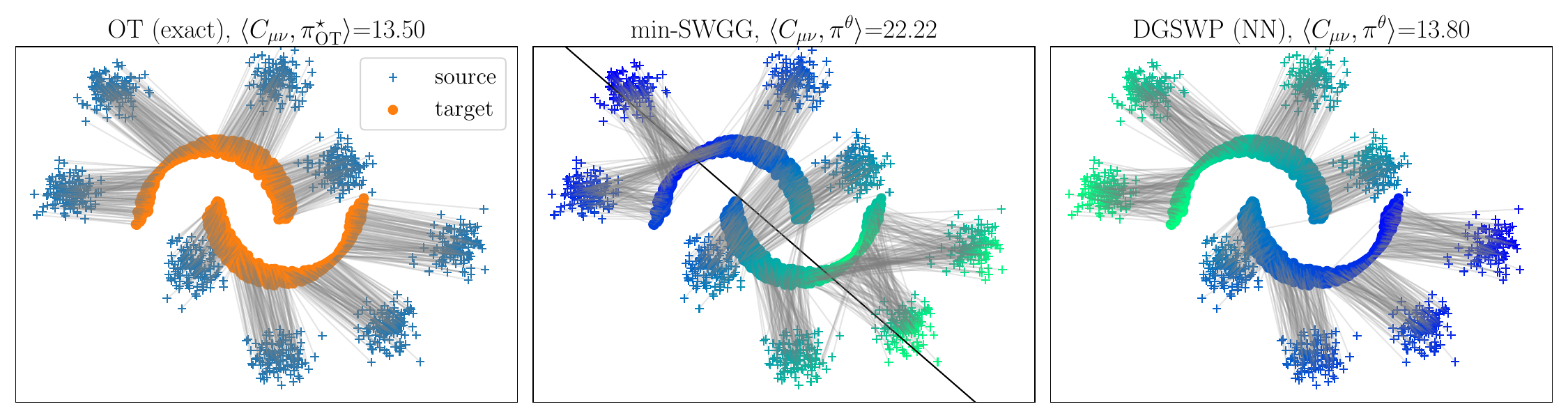}
    \caption{8Gaussians (source) to Two Moons (target) distributions and associated OT plans in Grey: (Left) exact solution (Middle) min-SWGG, that projects samples on an optimal line determined by random sampling (Right) Differentiable Generalized SW plan, that relies on a neural network to get non linear-based ordering of the samples. Gradient of colours represent the ordering of the samples.\looseness=-1
    }
    \label{fig:illus_ot}
\end{figure}

\textbf{Approximation of OT loss with sliced-OT.} The Sliced-Wasserstein Distance (SWD)~\citep{rabin2012wasserstein,bonneel2015sliced} approximates the Wasserstein distance by projecting onto one-dimensional subspaces, where OT has a closed-form solution obtained in $O(n \log n)$.
While SWD averages over multiple projections, max-SWD~\citep{deshpande2019max} selects the most informative one, enabling efficient computations on large-scale problems while preserving key properties. 
% \rt{TODO Romain : introduire XSW ici en 1 phrase.}
Generalized SWD alleviates the inefficiencies caused by the linear projections using polynomial projections or neural networks \cite{kolouri2019generalized}. They build on generalized Radon transforms to use polynomial projections or neural networks, and provide conditions for which it remains a valid distance: the generalized Radon transform must be injective. Rather than considering non-linear projections, \citet{chen2022augmented} aim at better capturing non-linearities by \emph{augmenting} the input space, using injective neural networks, such that linear projections could better capture them. Generalized SWD has also been defined in the case of tree metrics~\cite{tran2025tree}.

\textbf{Approximation of OT plan with sliced-OT.} Addressing the challenge that none of the previous works provide an approximated transport plan, \citep{mahey2024fast} proposes min-SWGG, a slicing scheme that lifts one-dimensional OT plans to the original space; \cite{liu2024expectedslicedtransportplans} follow the same line but rather define expected plans computed over all the lines instead of retaining the best one.  

\textbf{Differentiable OT.} As it is a discrete problem, OT is not differentiable \emph{per se}. Several formulations, that mostly rely on \emph{smoothing} the value function by adding a regularization term, have then been defined, the entropic-regularized version of the OT problem~\citep{cuturi2013sinkhorn} being one of the most striking examples. It can be efficiently solved via Sinkhorn’s matrix scaling algorithm, providing an OT approximation that is differentiable. \citet{blondel2018smooth} use a $\ell_2$ regularization term to define a smooth and sparse approximation. Regarding the unregularized OT formulation, an approximated gradient can be computed using sub-gradient on the dual formulation (see \cite{genevay2016stochastic} for instance). 

\textbf{Optimizing on discrete problems.} %As seen as a function of the transport plan, OT is piecewise constant \rt{nan c'est pas vrai, c'est quand on le voit comme une fonction de la direction de projection : quand on le voit comme une fonction du plan de transport c'est une application linéaire} (see fig. \ref{fig:illus_ot}  for an illustration)
The OT problem is a linear program and the optimal plan belongs to the (potentially rescaled) Birkhoff polytope, \emph{i.e.} the set of doubly-stochastic matrices, which is discrete. It is also the case for a wide range of problems for which the solution is prescribed to belong to such a discrete set. For instance, one can cite the sorting operator, the shortest path or the top-$k$ operator. 
They break back-propagation along the computational graph, hence preventing their use in deep learning pipelines. 
Numerous works have been proposed to provide differentiable proxies, based on smoothing the operators: dataSP~\cite{lahoud2024datasp} define a differentiable all-to-all shortest path algorithm, smoothed sorting and ranking~\cite{blondel2020fast} can be defined by projection onto the permutahedron, perturbed optimizers~\cite{berthet2020learning} have also been defined, relying on perturbing the input data, to name a few.

\textbf{Contributions.} In this paper, we aim at approximating the OT distance and plan. We rely on a slicing scheme, extending the framework of~\cite{mahey2024fast}, highlighting that it is an instance of a bilevel optimisation problem. We provide two main contributions: a generalized Sliced-Wasserstein that provides approximated OT plans relying on non-linear projections (see Fig.~\ref{fig:illus_ot} for an illustration), then a GPU-friendly optimization algorithm to find the best projector. 
%\rt{Ici dire que la contrib est en deux partie : déjà on permet de généraliser à des fonctions qui ne sont pas des projections orthogonales et deuxièmement on propose un solveur GPU-friendly}
% \textbf{Outline.} 
% The rest of the paper is organized as follows. In Section~\ref{sec:OT}, we provide the necessary background on OT together with a description of the main sliced-based algorithms. We then define the Generalized Sliced Wasserstein Plan in Section~\ref{sec:GSWP} and provide a differentiable approximation (Section~\ref{sec:DGSWP}). 
We showcase the benefits brought by these contributions in three different experimental contexts: on a 2 dimensional example where a non-linear projection is sought; conducting gradient flow experiment in high dimensional space; introducing a novel sliced OT-based conditional flow matching for image generation. 
% Finally, we conclude and draw some perspectives.
\section{Preliminaries on Optimal Transport} \label{sec:OT}
We now give the necessary background on Optimal Transport and on sliced-based approximations. For a reference on computational OT, the reader can refer to~\citep{peyre2019computational}.

\subsection{Discrete Optimal Transport} 
We consider two point clouds $\{x_i\}_{i=1}^n \in \mathcal{X}^n$ and $\{y_i\}_{i=1}^m \in \mathcal{X}^m$ where $\mathcal{X}\subset \mathbb{R}^d$ is a discrete subset.
We denote their associated empirical distributions $\mu = \sum_{i=1}^n a_i \delta_{x_i}$ and $\nu = \sum_{i=1}^m b_i \delta_{y_i}$.

\textbf{Formulation and Wasserstein distance.}
We consider a cost function $c$ that will be in our setting $c(u,v) = \| u - v \|_p^p$ for $p>1$ and $C_{\mu\nu} = (c(x_i,y_j))_{i=1,j=1}^{n,m}$.
Traditional Kantorovich formulation of optimal transport is given by the Wasserstein metric on $\mathcal{P}(\mathbb{R}^d)$ defined as
\begin{equation}\label{eq:ot}
    W_p^p(\mu, \nu) = \min_{\pi} \, \sum_{i=1}^n\sum_{j=1}^m \pi_{i,j} \| x_i - y_j \|_p^p
    \quad \text{subject to} \quad
    \pi \in U(a,b) ,
\end{equation}
where 
$
    U(a,b) =
    \{ 
        \pi \in \mathbb{R}_{\geq 0}^{n \times m} \, : \,
        \pi 1_m = a, \pi^\top 1_n = b
    \}
$
is the set of couplings between $\mu$ and $\nu$, 
Eq.~\eqref{eq:ot} is a convex optimization problem and an optimal transport plan $\pi_{\text{OT}}^\star$ is a solution. % of~\eqref{eq:ot}.
Note that, \emph{a priori}, there is no reason for this minimizer to be uniquely defined.

\textbf{One-dimensional OT.} When $d = 1$, and $\mu,\nu$ are empirical distributions with $a_i=b_i=1/n$, the optimal transport problem is equivalent to the assignment problem.
In this case, the Wasserstein distance can be computed by sorting the empirical samples, resulting in an overall complexity of $O(n \log n)$. 
Let $\sigma$ and $\tau$ be permutations such that $x_{\sigma(1)} \leq x_{\sigma(2)} \leq \ldots \leq x_{\sigma(n)}$ and $y_{\tau(1)} \leq y_{\tau(2)} \leq \ldots \leq y_{\tau(n)}$. The Wasserstein distance is then given by
$
    W_p^p(\mu, \nu) = \frac{1}{n} \sum_{i=1}^n |x_{\sigma(i)} - y_{\tau(i)}|^p
$.
The optimal transport is thus monotone and its plan $\pi_{\text{OT}}^\star$ has the form of a permutation matrix. 
% $\pi_{\text{OT}}^\star = \frac{1}{n} \sum_{i=1}^n \mathbf{1}_{(x_{\sigma(i)}, y_{\tau(i)})}$.
%\lc{$\pi_{\text{OT},{\sigma(i), \tau(i)}}^\star=1/n, \forall i$ and 0 otherwise}.
%and the optimal permutation is given by $\sigma \circ \tau^{-1}$.
Note that this approach can be easily extended to $n\neq m$ and arbitrary marginals $a,b$~\cite{peyre2019computational}.

\subsection{Sliced Wasserstein Distance} 

\textbf{Formulation. }Sliced-Wasserstein (SWD) relies on a simple idea: disintegrate the original problem onto unidimensional ones, and average over the different solutions. 
More precisely, SWD~\citep{rabin2012wasserstein} approximates the Wasserstein distance by averaging along projection directions $\theta \in \mathbb{S}^{d-1}$ as
\begin{equation}\label{eq:sw}
    SWD_p^p(\mu,\nu) :=
    \int_{\mathbb{S}^{d-1}} W_p^p(P_\sharp^\theta \mu, P_\sharp^\theta \nu) \mathrm{d\lambda}(\theta) ,
\end{equation}
where $P^\theta : \mathbb{R}^d \to \mathbb{R}$ is the 1D projection onto the unit vector $\theta$, $P^\theta(x) = \langle x, \theta \rangle$, and $\lambda$ is the uniform distribution on the unit sphere $\mathbb{S}^{d-1}$.
Typically, SWD is computed thanks to a Monte-Carlo approximation in which $L$ directions are drawn independently, leading to a computational
complexity of $O(dLn + Ln \log n)$. One of the main drawbacks of SWD is that it requires a high number of random projections, which leads to intractability for high dimensional problems. Then, there have been works to perform selective sampling, \emph{e.g.} ~\cite{nguyen2021distributional}, or to optimize over the directions~\cite{deshpande2019max, mahey2024fast}. The two later works rely on non-convex formulations that can be optimized.% efficiently.
%\rt{TODO : revoir après la relecture si le message est clair de pourquoi c'est faciule d'optimiser pour SWD mais dur pour min-SWGG} \lc{pas la place mais j'ai viré le efficiently}

\textbf{Generalized Sliced WD.} The idea of non-linear slicing has been explored in several works, with the aim to improve the projection efficiency, \emph{e.g.} when the data live on non-linear manifolds. In~\citep{kolouri2019generalized}, the generalized SWD uses nonlinear projections such as neural network-based ones; the conditions on which it yields a valid metric are also stated: the projection map must be injective. In~\cite{nguyen2021distributional}, a generalized projection is also proposed in addition to selective sampling. Augmented Sliced Wasserstein (\rt{ASWD})~\citep{chen2022augmented} pursued the same goal but, rather than considering non-linear projections, they use a neural network to \emph{augment} the input space, which leads to a space on which a linear projection better fits the data. All these works lead to significant improvements in a wide set of machine learning scenarios.\looseness=-1

\subsection{Sliced Wasserstein Plan} 
As it is defined as an average over 1D OT distances, SWD does not provide inherently a transport plan\footnote{Note that a transport plan can also be inferred when performing SW gradient flows by putting into correspondence the original and final samples when the algorithm has converged.}. Recently, some works have tackled this limitation by lifting one or several one-dimensional plans back to the original multidimensional space~\cite{mahey2024fast, liu2024expectedslicedtransportplans}.

\looseness=-1
\textbf{Sliced Wasserstein Generalized Geodesics.} \citet{mahey2024fast} introduced an OT surrogate that lifts the plan from the 1D projection onto the original space. In more detail, when $n=m$, it is defined as: 
\begin{equation}
    \text{min-SWGG}^p_p(\mu, \nu) = \min_{\theta \in \mathbb{S}^{d-1}} \text{SWGG}^p_p(\mu, \nu, \theta) := \frac{1}{n} \sum_{i=1}^n \| x_{\sigma_\theta(i)} -  y_{\tau_\theta(i)}\|^p_p
\end{equation}
where $\sigma_\theta$ and $\tau_\theta$ are the permutations obtained by sorting $P_\sharp^\theta \mu$ and $P_\sharp^\theta \nu$. %The optimal permutation is then obtained by $\tau_\theta^{-1}(\sigma_\theta)$. 
Note that it extends naturally to the case where $n\neq m$ which we omit here for brevity. 
By setting $p=2$, it hinges on the notion of Wasserstein generalized geodesics~\citep{ambrosio2008gradient} with pivot measure supported on a line. This alternative formulation allows deriving an optimization scheme to find the optimal $\theta$ that relies on multiple \emph{copies} of the projected samples, which holds only for $p=2$. For a fixed direction $\theta$, provided that families $(P_\sharp^\theta(x_i))_i$ and $(P_\sharp^\theta(y_i))_i$ are injective, that is to say there is no ambiguity on the orderings $\sigma_\theta$ and $\tau_\theta$, $\text{SWGG}(\cdot,\cdot,\theta)$ is itself a metric \citep{mahey2024unbalanced, tanguy2025sliced}. Min-SWGG has appealing properties: it yields an upper bound of the Wasserstein distance that still provides an explicit transport map between the input measures. The authors show that min-SWGG metrises weak convergence and is translation-equivariant.

%\lc{\textbf{Limitations of min-SWGG.} min-SWGG provides an upper bound of the the Wasserstein distance, and while there exists no results on the quality of the approximation, it depends in practice on two key aspects: i) the number of samples $n$ and the dimension $d$. The number of permutations that can be reached by projecting onto a line increases with $n$ for a fixed dimension~\citep{cover1967number}, leading to the fact that min-SWGG is equal to Wasserstein when $d\geq 2n$~\citep{mahey2024fast}. ii) the linearity of the distributions: min-SWGG coincides with the exact 2-Wasserstein distance when one distribution is supported on a line. These limitations motivate our first contribution: the definition of Generalized Wasserstein plans, in order to increase the number of permutations that can be reached and to deal with non linear data.}

\looseness=-1
\textbf{Limitations of min-SWGG.} While min-SWGG provides an upper bound on the Wasserstein distance, its tightness is not guaranteed in general. However, two settings are known where the bound is exact: (i) when one of the distributions is supported on a one-dimensional subspace ; and (ii) when the ambient dimension satisfies $d\geq 2n$~\citep{mahey2024fast}.
More generally, the number of reachable permutations increases with the ambient dimension~\citep{cover1967number}, making data dimensionality a critical factor in the quality of the approximation.
These insights motivate our first contribution: the definition of Generalized Wasserstein plans, which aim to extend the set of reachable permutations and better capture non-linear structures.
\section{Generalized Sliced Wasserstein Plan} \label{sec:GSWP}

Generalized Sliced Wasserstein Plan (GSWP) is built upon the idea of generalizing SWGG to an arbitrary scalar field parametrized by some $\theta \in \mathbb{R}^q$.
We consider a continuous map $\phi : \mathbb{R}^d \times \mathbb{R}^q \to \mathbb{R}$, typically a one-dimensional projection $\phi(x,\theta) = \langle x, \theta \rangle$ (in which case $q=d$), or a neural network, as in Fig.~\ref{fig:illus_ot} for example.
A Generalized Sliced Wasserstein Plan distance is defined as the one-dimensional Wasserstein distance between the point clouds through the image of $\phi^\theta := \phi(\cdot, \theta)$ for a given $\theta \in \mathbb{R}^q$.
\begin{definition}
Let $p > 1$, $\theta \in \mathbb{R}^q$ and $\mu,\nu \in \mathcal{P}(\mathcal{X})$.
The $\theta$-Generalized Sliced Wasserstein Plan distance between $\mu$ and $\nu$ is defined as 
\[
    d^\theta(\mu,\nu) =
    W_p( \phi_\sharp^\theta \mu, \phi_\sharp^\theta \nu) .
\]
Any element $\pi^\theta(\mu,\nu) \in U(a,b)$ that achieves $d^\theta(\mu,\nu)$ is called a $\theta$-Generalized Sliced Wasserstein Plan ($\theta$-GSWP).
\end{definition}
Denoting $\theta \mapsto C_{\mu\nu}^{\phi}(\theta)$ the cost matrix between $\mu,\nu$ through the image of $\phi$, a $\theta$-GSWP reads %\lc{idem: question de l'exposant p à rappeler ou pas}
\begin{align}\label{eq:gswp}
    \pi^\theta \in \arg\min_{\pi \in U(a,b)} g(\pi,\theta) := \langle C_{\mu\nu}^{\phi}(\theta), \pi \rangle = \sum_{i=1}^{n}\sum_{j=1}^m \pi_{i,j} | \phi(x_i,\theta) - \phi(y_j,\theta) |^p .
\end{align}
The $g(x, \cdot)$ function is continuous but not convex, as illustrated in Fig.~\ref{fig:illus}.
Note that, \emph{i)} for $p > 1$, the map $C_{\mu\nu}^{\phi}$ has the same regularity as $\phi$ and \emph{ii)} a solution $\pi^\theta$ of~\eqref{eq:gswp} is a suboptimal point of the standard problem~\eqref{eq:ot}.
Indeed, since the optimization occurs on the same coupling space $U(a,b)$, $\pi^\theta$ is an admissible point for the original problem.
GSWP defines a distance on the space of measures $\mathcal{P}(\mathcal{X})$.
\begin{proposition}\label{prop:distance}
Let $\theta \in \mathbb{R}^q$ and assume $\phi^\theta$ is an injective map on $\mathcal{X}$. % to $\mathbb{R}$.
Then $d^\theta$ is a distance on $\mathcal{P}(\mathcal{X})$.
\end{proposition}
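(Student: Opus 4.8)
The plan is to read $d^\theta$ as the pullback of the one-dimensional Wasserstein metric along the pushforward map $T_\theta : \rho \mapsto \phi_\sharp^\theta \rho$, and then to show that $T_\theta$ is injective on $\mathcal{P}(\mathcal{X})$ under the hypothesis that $\phi^\theta$ is injective on $\mathcal{X}$. Recall that $W_p$ is a genuine metric on the space of probability measures on $\mathbb{R}$ (with finite $p$-th moment, which is automatic for the finitely supported measures considered here). Hence three of the four metric axioms transfer immediately: non-negativity and symmetry are read off from $d^\theta(\mu,\nu) = W_p(\phi_\sharp^\theta\mu, \phi_\sharp^\theta\nu)$, and the triangle inequality follows by applying the triangle inequality for $W_p$ to the three pushed-forward measures $\phi_\sharp^\theta\mu$, $\phi_\sharp^\theta\rho$, $\phi_\sharp^\theta\nu$, since $d^\theta(\mu,\nu) = W_p(\phi_\sharp^\theta\mu,\phi_\sharp^\theta\nu) \le W_p(\phi_\sharp^\theta\mu,\phi_\sharp^\theta\rho) + W_p(\phi_\sharp^\theta\rho,\phi_\sharp^\theta\nu) = d^\theta(\mu,\rho) + d^\theta(\rho,\nu)$. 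Likewise $\mu = \nu$ trivially gives $\phi_\sharp^\theta\mu = \phi_\sharp^\theta\nu$, hence $d^\theta(\mu,\nu) = 0$.

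The only substantive point is the identity of indiscernibles in the nontrivial direction, $d^\theta(\mu,\nu) = 0 \Rightarrow \mu = \nu$. Since $W_p$ is a metric on $\mathcal{P}(\mathbb{R})$, the assumption forces $\phi_\sharp^\theta\mu = \phi_\sharp^\theta\nu$ as Borel measures on $\mathbb{R}$ (note that $\phi$, being continuous, is Borel measurable, so the pushforwards are well defined). It then remains to deduce $\mu = \nu$, i.e.\ that $T_\theta$ is injective on $\mathcal{P}(\mathcal{X})$. Here I would exploit the discreteness of $\mathcal{X}$: any $\rho \in \mathcal{P}(\mathcal{X})$ is determined by the masses $\rho(\{x\})$, $x \in \mathcal{X}$. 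Fix $x \in \mathcal{X}$ and set $t = \phi(x,\theta)$; since $\mu$ is supported on $\mathcal{X}$ and $\phi^\theta$ is injective on $\mathcal{X}$, we have $(\phi^\theta)^{-1}(\{t\}) \cap \mathcal{X} = \{x\}$, whence
\[
    \mu(\{x\}) = \mu\big((\phi^\theta)^{-1}(\{t\})\big) = \phi_\sharp^\theta\mu(\{t\}) = \phi_\sharp^\theta\nu(\{t\}) = \nu(\{x\}) .
\]
As this holds for every $x \in \mathcal{X}$, we conclude $\mu = \nu$.

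I expect the main (and fairly mild) obstacle to be purely bookkeeping: making the ambient space of measures precise so that $W_p$ is finite and a metric there (finitely supported measures, as in the paper's point-cloud setting, or $\mathcal{P}_p(\mathcal{X})$ more generally), and checking measurability so that the pushforwards and the singleton-mass identity above are legitimate. Because $\mathcal{X}$ is discrete, the reduction ``a measure is determined by its masses on singletons'' is clean and no compactness, nor continuity of an inverse of $\phi^\theta$, is needed — the injectivity of the pushforward map follows directly from the injectivity of $\phi^\theta$ on $\mathcal{X}$. $\square$
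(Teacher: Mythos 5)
Your proof is correct and follows essentially the same route as the paper: all axioms except identity of indiscernibles are inherited directly from the one-dimensional $W_p$ metric, and the remaining point reduces to injectivity of the pushforward map $\rho \mapsto \phi_\sharp^\theta \rho$ on $\mathcal{P}(\mathcal{X})$, which both you and the paper derive from injectivity of $\phi^\theta$ on $\mathcal{X}$. The only (minor) difference is in that last step: the paper argues by contradiction, matching the atoms of the two pushed-forward empirical measures through a permutation, whereas you evaluate both pushforwards on the singleton $\{\phi^\theta(x)\}$ and use $(\phi^\theta)^{-1}(\{\phi^\theta(x)\}) \cap \mathcal{X} = \{x\}$ to recover the masses directly, which is a slightly cleaner way of reaching the same conclusion.
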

For clarity, all our proofs are presented in Sec~\ref{app:proofs}.
Note that the injectivity condition is akin to designing sufficient and necessary conditions for the injectivity of the generalized Radon transform~\cite{beylkin1984inversion} and relates to the problem of reconstructing measures from discrete measurements~\cite{tanguy2024cras}.
We recall that there is no continuous injective map from $\mathbb{R}^d$ to $\mathbb{R}$ for $d > 1$, and we emphasize that we require only injectivity from the (potential) support to $\mathbb{R}$.
In practice, using a general $\phi$ increases expressivity, enabling a wider range of permutations compared to a linear map.
This hypothesis is satisfied almost surely in the linear case, and could also be satisfied with an injective neural network~\cite{puthawala2022globally}.
We require this property to prove that we obtain a metric; the object $d^\theta$ still makes sense without this hypothesis. From a numerical point of view, we simply rely on the order given by the sort algorithm used by Python that can be interpreted as a selection map from the set of minimizers.

By an application of Rockafellar's envelope theorem, we also have a characterization of the gradient of $d^\theta$ as a function of $\theta$. 
Let us assume that $p > 1$, $\phi$ is jointly $C^1$ and~\eqref{eq:gswp} has a unique solution at $\theta \in \mathbb{R}^q$. 
% \lc{Je ne sais pas si ça peut être vérifié. Par exemple, si on prend $\theta$ qui appartient au cercle unité, il y a au moins 2 $\theta$ qui vont donner la même réponse (cf fig.2 avec la symmétrie sur les angles).}\sv{C'est dans l'autre sens, tu veux garantir que pour un theta, il y a une unique solution. C'est le cas presque-partout, sauf au point de ruputure justement où tu veux passer entre toutes les matrices bistochastiques qui interpolent entre deux matrice de permutations admissibles}
Then we have $\nabla_\theta d^\theta = \frac{\partial C_{\mu\nu}^\phi}{\partial \theta}(\theta)^\top \pi^\theta$.
Note that a similar result on the subdifferential of $d^\theta$ is true if $\phi$ is not differentiable but convex, thanks to Danskin's theorem.
\begin{definition}
Let $p > 1$ and $\mu,\nu \in \mathcal{P}(\mathcal{X})$.
The minimal Generalized Sliced Wasserstein Plan semimetric min-GSWP between $\mu$ and $\nu$ is defined as
\begin{align}\label{eq:mingswp}
    \text{min-GSWP}_p^p(\mu,\nu) = & \min_{\theta \in \mathbb{R}^q} h(\theta) := f(\pi^\theta) := \langle C_{\mu\nu}, \pi^\theta \rangle \\
    & \text{subject to} \quad
    \pi^\theta \in \arg\min_{\pi \in U(a,b)} g(\pi,\theta) .\nonumber
\end{align}
\end{definition}

Equation~\eqref{eq:mingswp} defines a bilevel optimization problem.
In the case where $\phi(x,\theta) = \langle x, \theta \rangle$ and if  we further constrain $\theta$ to live on the unit sphere, \eqref{eq:mingswp} is the min-SWGG~\citep{mahey2024fast} approximation of OT.
Observe that since $\pi^{\theta}$ is an admissible coupling for the original OT problem~\eqref{eq:ot}, $\text{min-GSWP}(\mu,\nu)$ is an upper bound for $W_{p}(\mu,\nu)$.
Unfortunately, the value function $\theta \mapsto h(\theta)$ does not, in general, possess desirable regularity properties, even for the simple choice of a one-dimensional projection $\phi(x,\theta) = \langle x, \theta \rangle$: it is discontinuous, as illustrated in Fig.~\ref{fig:illus}. 
Moreover, as soon as the maps $\phi(x, \theta)$ and $\phi(y, \theta)$ give rise to the same permutations $\sigma_\theta$ and $\tau_\theta$, the value of $\pi^\theta$ remains the same.
As a consequence, one cannot use (stochastic) gradient-based bilevel methods such as~\citep{pedregosa2016hyperparameter,dagreou2022framework,arbel2022amortized} on~\eqref{eq:mingswp}.

It turns out that even if min-SWGG was introduced as the minimization over the sphere $\mathbb{S}^{d-1}$, it is possible to see it as an unconstrained optimization problem thanks to the following lemma.
\begin{lemma}\label{lem:scaling}
Assume that $\theta \mapsto \phi(x, \theta)$ is 1-homogeneous for all $x \in \mathbb{R}^d$.
Then, $\theta \mapsto d^\theta(\mu, \nu)$ is 1-homogeneous, and $h$ is invariant by scaling: $h(c \theta) = h(\theta)$ for all $c > 0$.
\end{lemma}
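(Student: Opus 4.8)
The plan is to reduce the lemma to two elementary facts together with a remark on the selection rule: that $1$-homogeneity of $\phi(x,\cdot)$ turns a rescaling $\theta \mapsto c\theta$ (with $c>0$) into a rescaling of the one-dimensional cloud $\phi^\theta$ by the same factor $c$; that dilating the ground set of a one-dimensional transport problem by $c$ multiplies $W_p$ by $c$ without altering the optimal couplings; and that the concrete ordering-based selection of $\pi^\theta$ is insensitive to such a dilation. With these in hand both claims are immediate and essentially computation-free.

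For the homogeneity of $d^\theta$, I would introduce the dilation $D_c : \mathbb{R} \to \mathbb{R}$, $D_c(t)=ct$, and note that $\phi(x,c\theta) = c\,\phi(x,\theta)$ for every $x$ gives $\phi^{c\theta} = D_c \circ \phi^\theta$ as maps, hence $\phi^{c\theta}_\sharp \mu = (D_c)_\sharp \phi^\theta_\sharp \mu$ and likewise for $\nu$. Since couplings of $(D_c)_\sharp\alpha$ and $(D_c)_\sharp\beta$ are precisely the $(D_c,D_c)$-images of couplings of $\alpha,\beta$, one gets $W_p^p\big((D_c)_\sharp\alpha,(D_c)_\sharp\beta\big) = \inf_\gamma \int |ct-cs|^p\,d\gamma(t,s) = |c|^p\, W_p^p(\alpha,\beta)$; taking $\alpha=\phi^\theta_\sharp\mu$ and $\beta=\phi^\theta_\sharp\nu$ yields $d^{c\theta}(\mu,\nu) = |c|\,d^\theta(\mu,\nu)$, i.e. $d^\theta$ is $1$-homogeneous in $\theta$.

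For the scale-invariance of $h$, I would instead argue at the level of the discrete cost matrix of the lower-level problem. From $\phi(x_i,c\theta)=c\,\phi(x_i,\theta)$ and $\phi(y_j,c\theta)=c\,\phi(y_j,\theta)$ we obtain $C_{\mu\nu}^{\phi}(c\theta) = c^p\, C_{\mu\nu}^{\phi}(\theta)$, hence $g(\pi,c\theta) = c^p\, g(\pi,\theta)$ for all $\pi \in U(a,b)$. Because $c^p>0$, scaling the lower-level objective by this constant leaves its minimizers over $U(a,b)$ unchanged, so $\arg\min_{\pi} g(\pi,c\theta) = \arg\min_{\pi} g(\pi,\theta)$; as the upper-level cost $C_{\mu\nu}$ does not depend on $\theta$, any valid choice of $\pi^{c\theta}$ is a valid choice of $\pi^\theta$ and $h(c\theta) = \langle C_{\mu\nu},\pi^{c\theta}\rangle = \langle C_{\mu\nu},\pi^\theta\rangle = h(\theta)$.

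The one point deserving care — and the closest thing to an obstacle — is the non-uniqueness of $\pi^\theta$, which makes $h$ depend on the chosen selection map. This is harmless: the rule used in practice reads off $\pi^\theta$ from the sorting permutations $\sigma_\theta,\tau_\theta$ of $(\phi(x_i,\theta))_i$ and $(\phi(y_j,\theta))_j$, and since $D_c$ with $c>0$ is strictly increasing, sorting $(c\,\phi(x_i,\theta))_i$ returns exactly the same $\sigma_\theta$ (and similarly $\tau_\theta$). Hence the selected plan at $c\theta$ is literally the selected plan at $\theta$, so $h(c\theta)=h(\theta)$ holds for the concrete estimator and not merely at the level of argmin sets. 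Beyond this remark the argument is pure bookkeeping, with no genuine difficulty.
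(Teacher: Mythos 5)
Your proof is correct and follows essentially the same route as the paper: $1$-homogeneity of $\phi(x,\cdot)$ makes the lower-level cost $C^\phi_{\mu\nu}(c\theta)$ a positive multiple of $C^\phi_{\mu\nu}(\theta)$ (you correctly identify the factor as $c^p$, where the paper loosely calls it $1$-homogeneous), so the argmin set is unchanged and $h(c\theta)=h(\theta)$. You are in fact somewhat more thorough than the paper's own proof, which does not explicitly establish the $1$-homogeneity of $d^\theta$ (your dilation/pushforward argument handles this cleanly) and asserts $\pi^{c\theta}=\pi^{\theta}$ without addressing non-uniqueness, whereas your remark that the sorting-based selection is invariant under the strictly increasing map $t\mapsto ct$ closes that small gap.
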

In particular, for every open set where $h$ is differentiable, if $\phi$ is 1-homogeneous, the gradient flow $\dot \theta = - \nabla h(\theta)$ has orthogonal level lines $\langle \dot \theta, \theta \rangle = 0$, hence the dynamics occur on the sphere of radius equal to the norm of the initialization (see Lemma~\ref{lem:gfpreserve} in the Appendix).
In practice, it means -- and we observed -- that we do not have to care about the normalization of $\theta$ during our gradient descent.

\looseness=-1
When $\phi(x,\theta) = \langle x, \theta \rangle$, min-SWGG can be smoothed by making \emph{perturbed} copies of the projections $\phi(x,\theta)$; heuristics for determining the number of copies and the scale of the noise are given in \citep{mahey2024fast}. Nevertheless, the continuity of the obtained smooth surrogate cannot be guaranteed, and the formulation is only valid for $p=2$.
The following section proposes a differentiable approximation of min-GSWP rooted in probability theory which is valid for any $p>1$. The main difference with the aforementioned scheme is that we here perturb the parameters (or direction) $\theta$ rather than the samples.
\section{Differentiable Approximation of min-GSWP} \label{sec:DGSWP}
% \begin{figure}[t]
%     \centering
%     \includegraphics[width=.8\textwidth]{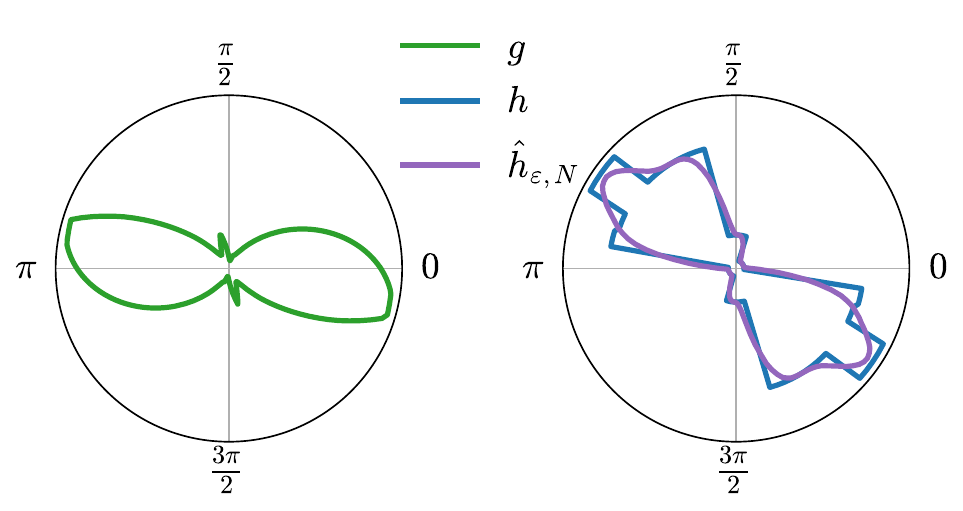}
%     % \includegraphics[width=.8\textwidth]{fig/illus_h_g_cartesian.pdf}
%     \caption{Example $h$ and $g$ (seen as a function of $\theta$ on the sphere $\mathbb{S}^1$) for a 2D OT problem. % where $n=m=3$ and $p=2$. 
%     Here, $\hat{h}_{\varepsilon, N}$ is a Monte-Carlo estimate of $h_\varepsilon$ whose gradient is $\hat{\nabla} h_{\varepsilon, N}$. % and its gradient is estimated as $\hat{\nabla} h_{\varepsilon, N}$ using Eq.~\ref{eq:grad_h}. 
%     Note that $g$ is continuous whereas $h$ is piecewise constant, hence the need for a smoothing mechanism, that results in $\hat{h}_{\varepsilon, N}$.
%     % The $x$-axis in this Figure corresponds to the angle between the unit vector $\theta$ and the horizontal axis of the 2D plane.
%     }
%     \label{fig:illus}
% \end{figure}

Smoothing estimators by averaging is a popular way to tackle nonsmooth problems.
We mention three families of strategies: \emph{i)} smoothing by (infimal) convolution,  \emph{ii)} smoothing by using a Gumbel-like trick~\citep{Abernethy2014Online,berthet2020learning}, and \emph{iii)} smoothing by reparameterization, and in particular by using Stein's lemma. % in the Gaussian context.
{Strategy \emph{i)} would be computationally intractable in high dimension, and strategy \emph{ii)} doesn't yield a consistent transport plan due to perturbations in the cost matrix. We then focus on the third option.}

We begin by recalling this classical result due to \citet{stein1981estimation}, which plays a central role in our analysis.
The lemma, restated below in our specific setting, provides an identity for computing gradients through expectations involving Gaussian perturbations.
Typically, Stein's lemma is stated for ``almost differentiable'' functions; we require here slightly less regularity.
\begin{assumption}\label{assu:hausdorff}
There exists a open set $C \subseteq \mathbb{R}^{q}$ with Hausdorff dimension $\mathcal{H}^{q-1}(\mathbb{R}^q \setminus C) = 0$ (in particular Lebesgue-negligible) such that the mapping $\theta \mapsto h(\theta)$ is continuously differentiable on $C$.
\end{assumption}
Under this assumption, Stein's lemma remains true.
\begin{lemma}[Stein's lemma]\label{lem:stein}
Let $Z \sim \mathcal{N}(0,\mathrm{Id}_q)$ be a standard multivariate Gaussian random variable, and $\varepsilon > 0$.
Suppose Assumption~\ref{assu:hausdorff} holds and that for all indices $j$, the partial derivatives satisfy the integrability condition $\mathbb{E}[|\partial_j h(Z)|] < +\infty$.
Then, the following identity holds:
\[
    \mathbb{E}_Z[\nabla h(\theta + \varepsilon Z)]
    =
    \varepsilon^{-1} \mathbb{E}_Z[h(\theta + \varepsilon Z) Z] .
\]
\end{lemma}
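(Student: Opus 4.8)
The plan is to run the classical one-dimensional integration-by-parts argument behind Stein's identity, arranged so that it only invokes the weak regularity granted by Assumption~\ref{assu:hausdorff}. First I would reduce to the canonical case $\theta = 0$, $\varepsilon = 1$ via the affine change of variables $z \mapsto \theta + \varepsilon z$: setting $g := h(\theta + \varepsilon\,\cdot)$, the map $g$ is continuously differentiable on the open set $\widetilde{C} := \varepsilon^{-1}(C - \theta)$, and since affine maps are Lipschitz and $\mathcal{H}^{q-1}$ scales like a fixed power of the dilation, one still has $\mathcal{H}^{q-1}(\mathbb{R}^q\setminus\widetilde{C}) = 0$; moreover $\partial_j g = \varepsilon\,(\partial_j h)(\theta + \varepsilon\,\cdot)$, so the integrability hypothesis carries over. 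It then suffices to establish $\mathbb{E}_Z[\nabla g(Z)] = \mathbb{E}_Z[g(Z)\,Z]$, the only algebraic input being the Gaussian identity $\varphi_1'(t) = -t\,\varphi_1(t)$ for the one-dimensional standard density $\varphi_1$.

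\textbf{Isolating the bad set.} Fix a coordinate $j$ and write $N := \mathbb{R}^q\setminus\widetilde{C}$. The crucial step is to observe that the orthogonal projection $\pi_{-j}\colon\mathbb{R}^q\to\mathbb{R}^{q-1}$ forgetting the $j$-th coordinate is $1$-Lipschitz, so $\mathcal{H}^{q-1}(\pi_{-j}(N)) \le \mathcal{H}^{q-1}(N) = 0$; since $\mathcal{H}^{q-1}$ agrees with Lebesgue measure on $\mathbb{R}^{q-1}$, the set $\pi_{-j}(N)$ is Lebesgue-negligible. Consequently, for Lebesgue-a.e. $w \in \mathbb{R}^{q-1}$ the whole axis-parallel line $\pi_{-j}^{-1}(w)$ misses $N$, lies in $\widetilde{C}$, and $t \mapsto g$ restricted to that line is $C^1$ on all of $\mathbb{R}$.

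\textbf{Slicing and integrating by parts.} I would then apply Fubini to split both $\mathbb{E}_Z[Z_j\,g(Z)]$ and $\mathbb{E}_Z[\partial_j g(Z)]$ into an inner one-dimensional integral over the $j$-th coordinate and an outer integral over the remaining ones; this is legitimate because $\mathbb{E}_Z[|Z_j\,g(Z)|] < +\infty$ (needed for the statement to be defined) and $\mathbb{E}_Z[|\partial_j g(Z)|] < +\infty$ by the integrability hypothesis. On a.e. outer slice the line lies entirely in $\widetilde{C}$, so the $C^1$ function $t \mapsto g(z)\,\varphi_1(t)$ belongs to $L^1(\mathbb{R})$ with $L^1$ derivative $\partial_j g(z)\,\varphi_1(t) - t\,g(z)\,\varphi_1(t)$; such a function has limit $0$ at $\pm\infty$, so the fundamental theorem of calculus yields $\int_{\mathbb{R}} t\,g(z)\,\varphi_1(t)\,dt = \int_{\mathbb{R}} \partial_j g(z)\,\varphi_1(t)\,dt$. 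Reassembling over the outer variable and stacking over $j = 1,\dots,q$ gives $\mathbb{E}_Z[g(Z)\,Z] = \mathbb{E}_Z[\nabla g(Z)]$, and undoing the change of variables ($\nabla g(z) = \varepsilon\,\nabla h(\theta + \varepsilon z)$) gives the claimed identity $\mathbb{E}_Z[\nabla h(\theta + \varepsilon Z)] = \varepsilon^{-1}\,\mathbb{E}_Z[h(\theta + \varepsilon Z)\,Z]$.

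\textbf{Main obstacle.} The only genuinely delicate point is weakening the usual ``almost differentiable'' assumption to Assumption~\ref{assu:hausdorff}: one must ensure the non-differentiability set $N$ contributes nothing to the integration by parts. Lebesgue-negligibility of $N$ alone would not suffice — a segment in $\mathbb{R}^2$ transverse to the axes is crossed by a positive-measure family of vertical lines — and it is precisely the hypothesis $\mathcal{H}^{q-1}(N) = 0$, together with the fact that Lipschitz maps do not increase Hausdorff measure, that forces $\pi_{-j}(N)$ to be Lebesgue-null and thereby lets almost every axis-parallel line avoid $N$ entirely. The remaining ingredients (Fubini, vanishing of the Gaussian boundary terms at infinity) are routine given the stated integrability.
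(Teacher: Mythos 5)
Your proof is correct, and its geometric core is the same as the paper's: both arguments rest on the observation that the coordinate projection onto $\mathbb{R}^{q-1}$ is $1$-Lipschitz, so the $\mathcal{H}^{q-1}$-null non-differentiability set has Lebesgue-null projection and almost every axis-parallel line avoids it. Where you diverge is in what is done with this fact: the paper only extracts absolute continuity of $h$ along a.e. such line, concludes $h \in W^{1,1}_{\mathrm{loc}}$, i.e.\ almost differentiability in the sense of \citet{stein1981estimation}, and then invokes Stein's Lemma~1 (via Proposition~1 of \citep{HANSEN201876}) as a black box; you instead unfold that black box and carry out the one-dimensional integration by parts slice by slice, which makes the argument self-contained, reduces cleanly to $\theta=0$, $\varepsilon=1$, and even yields the slightly stronger observation that $h$ is $C^1$ along a.e.\ line rather than merely absolutely continuous on compacts. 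The price of the self-contained route is some integrability bookkeeping that you leave implicit: your slice argument uses $\mathbb{E}|h(\theta+\varepsilon Z)|<\infty$ (to get $F=g\,\varphi_1\in L^1$ on a.e.\ slice) and takes $\mathbb{E}|Z_j\,h(\theta+\varepsilon Z)|<\infty$ as given, neither of which appears among the stated hypotheses. Both are harmless in this paper, since $h(\theta)=\langle C_{\mu\nu},\pi^\theta\rangle$ is bounded ($\pi^\theta$ ranges over the compact polytope $U(a,b)$); moreover the $L^1$ membership of $F$ itself can be dispensed with, because $F'\in L^1$ already forces $F$ to have limits at $\pm\infty$, and these limits must vanish whenever $t\,F(t)\in L^1$ — and, as in Stein's original proof, the finiteness of $\mathbb{E}[Z_j\,h(\theta+\varepsilon Z)]$ can in fact be derived from the derivative's integrability rather than assumed.
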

Note that this version of Stein's lemma is slightly more general than the original of~\citet{stein1981estimation} in terms of regularity asked to the value function $h$ (in particular, we do not require it to be absolutely continuous).
Assumption~\ref{assu:hausdorff} may not be satisfied when using a non-smooth network, such as a ReLU network.
It is possible to alleviate this issue thanks to conservative calculus~\cite{bolte2021conservative}.

We define the following smoothed value function $h_\varepsilon$, which corresponds to a Gaussian smoothing of the original non-differentiable outer optimization problem:
\[
    h_\varepsilon(\theta) = 
    \mathbb{E}_Z\left[ h(\theta + \varepsilon Z) \right] =
    \langle C_{\mu\nu}, \pi_\varepsilon^\theta \rangle 
\]
where the $\theta$-\textbf{Differentiable Generalized Sliced Wasserstein Plan ($\theta$-DGSWP)} at smoothing level $\varepsilon$ reads
\begin{equation}\label{eq:dgswp}
    \pi_\varepsilon^\theta = \mathbb{E}_Z\left[ \arg\min_{\pi \in U(a,b)} g(\pi,\theta + \varepsilon Z) \right] .
\end{equation}

\begin{figure}[t]
    \begin{minipage}{0.55\textwidth}
        \centering
        \includegraphics[width=.9\linewidth]{}
    \end{minipage}%
    \hfill
    \begin{minipage}{0.43\textwidth}
        \captionof{figure}{Example $g$ and $h$ (seen as a function of $\theta$ on the sphere $\mathbb{S}^1$) for a 2D OT problem. $\hat{h}_{\varepsilon, N}$ is a Monte-Carlo estimate of $h_\varepsilon$ with gradient $\hat{\nabla} h_{\varepsilon, N}$.
        Note that $g$ is continuous whereas $h$ is piecewise constant, hence there is a need for a smoothing mechanism, that results in $\hat{h}_{\varepsilon, N}$.
        }
        \label{fig:illus}
    \end{minipage}
\end{figure}

Due to the nice regularity properties of Stein's approximation, we get the following proposition regarding our approximation of $\theta$-GSWP.
\begin{proposition}\label{prop:diff}
    Suppose Assumption~\ref{assu:hausdorff} holds.
  Let $\mu, \nu \in \mathcal{P}(\mathcal{X})$.
  The following statements are true:
  \begin{enumerate}
    \item (Admissibility.) For any $\varepsilon > 0$, $\pi_\varepsilon^\theta$ is admissible (i.e., $\pi_\varepsilon^\theta \in U(a,b)$). Hence, $h_{\varepsilon}(\theta)$ gives an upper-bound of the Wasserstein metric $h_{\varepsilon}(\theta) \geq W_{p}^{p}(\mu,\nu)$.% \rt{an upper bound, nan ?} \lc{problème dans $W_p^p$ et $h_e$ qui a un exposant $1/p$}\sv{tel qu'on écrit je crois que $h_\varepsilon$ se compare à $W_p^p$ et $h_\varepsilon^{1/p}$ se compare à $W_p$)}
    \item (Differentiability.) For any $\varepsilon > 0$, the map $\theta \mapsto h_\varepsilon(\theta)$ is differentiable. Moreover, we have
          \[ \nabla_\theta h_\varepsilon(\theta) = \varepsilon^{-1} \mathbb{E}_Z[h(\theta + \varepsilon Z) Z] . \]
    \item (Consistency.) For almost all $\theta \in \mathbb{R}^q$, $\lim_{\varepsilon \to 0} h_{\varepsilon}(\theta) = h(\theta)$, and if $\nabla h(\theta)$ exists, then $\lim_{\varepsilon \to 0} \nabla h_{\varepsilon}(\theta) = \nabla h(\theta)$
    \item (Distance.) Let $\theta \in \mathbb{R}^q$, if $\phi^\theta$ is injective on $\mathcal{X}$ then the map $(\mu,\nu) \mapsto (h(\theta)(\mu,\nu))^{1/p}$  is a distance on $\mathcal{P}(\mathcal{X})$. Assume that for almost all $\theta \in \mathbb{R}^q$,  $\phi^\theta$ is injective. Then, $(\mu,\nu) \mapsto (h_\varepsilon(\theta)(\mu,\nu))^{1/p}$ is also a distance on $\mathcal{P}(\mathcal{X})$.
  \end{enumerate}
  \label{prop:gradient}
\end{proposition}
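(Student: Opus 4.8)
The plan is to prove the four items essentially separately, with item~3 building on item~2 and the second half of item~4 on its first half.

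\emph{Admissibility and differentiability.} For item~1, I would fix once and for all a measurable selection $z \mapsto \pi^{\theta+\varepsilon z} \in \arg\min_{\pi \in U(a,b)} g(\pi,\theta+\varepsilon z)$ of the nonempty, compact-valued argmin multifunction (a measurable selection theorem applies since $g$ is continuous), so that $\pi_\varepsilon^\theta$ in~\eqref{eq:dgswp} is a well-defined Bochner integral, integrability being automatic because $U(a,b)$ is bounded. Since $U(a,b)$ is closed and convex and every integrand value lies in it, $\pi_\varepsilon^\theta \in U(a,b)$, whence $h_\varepsilon(\theta) = \langle C_{\mu\nu}, \pi_\varepsilon^\theta\rangle \ge \min_{\pi \in U(a,b)}\langle C_{\mu\nu},\pi\rangle = W_p^p(\mu,\nu)$. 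For item~2, I would write the smoothing as a Gaussian convolution $h_\varepsilon(\theta) = \int_{\mathbb{R}^q} h(u)\,\varphi_\varepsilon(u-\theta)\,\mathrm{d}u$, with $\varphi_\varepsilon$ the density of $\mathcal{N}(0,\varepsilon^2\mathrm{Id}_q)$, and differentiate under the integral sign; this is licit because $h$ is bounded and measurable ($0 \le h(\theta) \le \max_{i,j}(C_{\mu\nu})_{i,j}$ as $\pi^\theta \in U(a,b)$) and $\nabla_\theta \varphi_\varepsilon(u-\cdot)$ is, locally uniformly in $\theta$, dominated by an integrable function of $u$. Carrying out the differentiation and substituting $u = \theta + \varepsilon z$ back recovers $\nabla_\theta h_\varepsilon(\theta) = \varepsilon^{-1}\mathbb{E}_Z[h(\theta+\varepsilon Z) Z]$.

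\emph{Consistency.} The limit $h_\varepsilon(\theta) \to h(\theta)$ holds at every continuity point of $h$: there $h(\theta+\varepsilon Z) \to h(\theta)$ almost surely and $h$ is bounded, so dominated convergence applies; as $h$ is piecewise constant (cf.\ Fig.~\ref{fig:illus}) its discontinuity set is Lebesgue-negligible, giving the claim for a.e.\ $\theta$. For the gradient I would apply Stein's lemma (Lemma~\ref{lem:stein}, whose hypotheses are Assumption~\ref{assu:hausdorff} and the integrability of the partials) to rewrite the formula of item~2 as $\nabla h_\varepsilon(\theta) = \mathbb{E}_Z[\nabla h(\theta+\varepsilon Z)]$; since $\mathbb{R}^q \setminus C$ is $\mathcal{H}^{q-1}$-negligible, hence Lebesgue-null, and $\theta+\varepsilon Z$ has a density, one has $\theta + \varepsilon Z \in C$ almost surely, where $\nabla h$ is continuous, so a further dominated convergence gives $\nabla h_\varepsilon(\theta) \to \nabla h(\theta)$ whenever $h$ is classically differentiable at $\theta$ (with $\nabla h$ extending continuously to $\theta$ along $C$).

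\emph{Distance.} For fixed $\theta$, I would first note that $h(\theta)(\mu,\nu)$ is the $p$-th power of the generalized SWGG surrogate: the cost, measured with the original $C_{\mu\nu}$, of the lift to $U(a,b)$ of the monotone one-dimensional coupling of $\phi_\sharp^\theta\mu$ and $\phi_\sharp^\theta\nu$. Under injectivity of $\phi^\theta$ the induced orderings are unambiguous, so symmetry and identity of indiscernibles are immediate, and the triangle inequality follows because composing the monotone couplings of $(\phi_\sharp^\theta\mu,\phi_\sharp^\theta\nu)$ and $(\phi_\sharp^\theta\nu,\phi_\sharp^\theta\rho)$ yields the monotone coupling of $(\phi_\sharp^\theta\mu,\phi_\sharp^\theta\rho)$, combined with the triangle inequality of $(\mathbb{R}^d,\|\cdot\|_p)$ and Minkowski's inequality in $\ell_p$ --- this is the argument of~\citep{mahey2024fast,mahey2024unbalanced,tanguy2025sliced} for SWGG. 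For the smoothed quantity, since $\phi^{\theta'}$ is injective for a.e.\ $\theta'$ and $\theta + \varepsilon Z$ charges no Lebesgue-null set, $h_\varepsilon(\theta)(\mu,\nu) = \mathbb{E}_Z[h(\theta+\varepsilon Z)(\mu,\nu)]$ is an average of $p$-th powers of genuine distances; symmetry and identity of indiscernibles pass to the average, and the triangle inequality is Minkowski's inequality in $L^p(\mathbb{P}_Z)$ applied to the almost-sure pointwise inequalities $h(\theta+\varepsilon Z)(\mu,\rho)^{1/p} \le h(\theta+\varepsilon Z)(\mu,\nu)^{1/p} + h(\theta+\varepsilon Z)(\nu,\rho)^{1/p}$.

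\emph{Main obstacle.} I expect the delicate step to be the gradient half of item~3, namely swapping $\lim_{\varepsilon\to 0}$ with the expectation. It hinges entirely on Assumption~\ref{assu:hausdorff}: this is precisely the regularity that makes Stein's lemma applicable and that forces the perturbed parameter to stay almost surely in the region $C$ where $\nabla h$ behaves well; without it one would instead have to show directly that the gradients of the Gaussian mollification of $h$ converge to $\nabla h$ at differentiability points, which requires a careful truncation of the Gaussian tails. Items~1, 2, and~4 are comparatively routine, modulo the measurable-selection and differentiation-under-the-integral technicalities.
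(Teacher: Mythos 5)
Your proposal is correct at the paper's level of rigor, but it takes a genuinely different route in two places and is actually tighter in a third. For admissibility and for the distance property of $h(\theta)$ you essentially reproduce the paper's argument (convexity of $U(a,b)$ plus suboptimality of any admissible plan; gluing of the monotone 1D couplings plus Minkowski), with the measurable-selection remark being a welcome extra detail. For differentiability, the paper derives the gradient formula by invoking Stein's lemma (Lemma~\ref{lem:stein}) together with dominated convergence, whereas you differentiate the Gaussian convolution $h_\varepsilon = h * \varphi_\varepsilon$ directly, transferring the gradient onto the kernel; this is more elementary, needs only boundedness and measurability of $h$, and in fact does not use Assumption~\ref{assu:hausdorff} at all for this item. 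For gradient consistency the paper uses the variance-reduction identity and the pointwise limit of the difference quotient $(h(\theta+\varepsilon Z)-h(\theta))/\varepsilon \to \langle \nabla h(\theta), Z\rangle$ combined with $\mathbb{E}[ZZ^\top]=\mathrm{Id}_q$, while you pass through Stein's identity $\nabla h_\varepsilon(\theta)=\mathbb{E}_Z[\nabla h(\theta+\varepsilon Z)]$ and the continuity of $\nabla h$ on the open co-null set $C$; this establishes the claim for $\theta \in C$ (hence for almost every $\theta$), which is slightly weaker than the paper's ``whenever $\nabla h(\theta)$ exists'', and, exactly like the paper's own DCT step, it leaves the uniform-in-$\varepsilon$ dominating function implicit --- neither argument spells it out, so this is not a gap relative to the paper, though you rightly flag it as the delicate point. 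Finally, for the triangle inequality of $(h_\varepsilon)^{1/p}$, your Minkowski-in-$L^p(\mathbb{P}_Z)$ step, namely $(\mathbb{E}[A])^{1/p}=\|A^{1/p}\|_{L^p}\leq \|B^{1/p}\|_{L^p}+\|C^{1/p}\|_{L^p}$ when $A^{1/p}\leq B^{1/p}+C^{1/p}$ almost surely, is precisely what is needed to conclude for the quantity $(\mathbb{E}_Z[h(\theta+\varepsilon Z)])^{1/p}$ appearing in the statement; the paper's proof stops at the triangle inequality for $\mathbb{E}_Z[h(\theta+\varepsilon Z)^{1/p}]$, which is a different quantity, so on this point your argument completes a step the paper leaves open.
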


While this result allows for the unbiased estimation of gradients, it is known that a naive Monte Carlo approximation of the right-hand side tends to suffer from high variance.
To address this, one may consider an alternative formulation that often results in a reduced variance estimator.
Specifically, using a control variate approach~\citep{blondel2024elementsdifferentiableprogramming}, one observes that %\lc{Est ce que la ref de Blondel est la bonne à ajouter ici?}\sv{non, ils font pas du Stein, après je veux dire ici on claim pas un rate sur la variance, du coup c'est vraiment juste la linéarité de l'esperance}
\begin{equation}\label{eq:variancereduction}
    \mathbb{E}_Z[\nabla h(\theta + \varepsilon Z)]
    =
    \varepsilon^{-1} \mathbb{E}_Z[(h(\theta + \varepsilon Z) - h(\theta)) Z] ,
\end{equation}
which holds due to the zero mean of the Gaussian distribution and the linearity of the expectation.
Equation~\eqref{eq:variancereduction} leads to a Monte-Carlo estimator of the gradient $\nabla h_\varepsilon(\theta)$ defined as
\begin{equation}
    \widehat \nabla h_{\varepsilon, N}(\theta)
    =
    \frac{1}{\varepsilon N} \sum_{k=1}^{N} (h(\theta + \varepsilon z_k) - h(\theta)) z_k =
    \frac{1}{\varepsilon N} \sum_{k=1}^{N} \langle C_{\mu\nu}, \pi^{\theta + \varepsilon z_k} - \pi^\theta \rangle z_k ,
    \label{eq:grad_h}
\end{equation}
where the vectors $z_i \sim \mathcal{N}(0,\mathrm{Id}_q)$ are independent standard Gaussian samples.
Hence, estimating the Monte-Carlo gradient $\nabla h_{\varepsilon, N}(\pi, \theta)$ requires solving $N+1$ 1D-optimal transport problems for an overall cost of $O(N (n+m) \log (n+m))$.
% \lc{ajouter un algo en appendix ? Faire référence à la figure 2.}
% \rt{Ici discuter de pourquoi on ne bruite pas les samples ?}
Algorithm~\ref{alg:gd} (in Supplementary material) describes a gradient descent method to perform the minimization of $h_\varepsilon$ using this Monte-Carlo approximation.

% \rt{Ici, je verrais bien une illus de fonction smoothée comme on avait dans le typst, wdyt?} \lc{computational complexity}

\section{Experiments} \label{sec:expes}
\looseness=-1
We evaluate the performance of our Differentiable Generalized Sliced Wasserstein Plans, coined DGSWP, by assessing its ability to provide a meaningful approximated OT plan in several contexts. First, we consider a toy example where a non-linear projection must be considered; we then perform gradient flow experiments on Euclidean and hyperbolic spaces, demonstrating the versatility of our approach. Finally, we integrate sliced-OT plans in an OT-based conditional flow matching in lieu of mini-batch OT. 
In all the experiments, % presented in this paper, 
we use $\varepsilon=0.05$ and $N=20$ as DGSWP-specific hyperparameters.
Full experimental setups and additional results are provided in App.~\ref{app:expe_details}. Implementation is available online\footnote{\url{https://github.com/rtavenar/dgswp}}; %for all computations related to OT, 
we also use %the python OT toolbox
POT toolbox~\cite{flamary2021pot}.

\subsection{DGSWP as an OT approximation}

\label{sec:xp_ot_approx}

We begin by examining the illustrative scenario shown in Fig.~\ref{fig:illus_ot}, where the task is to compute an optimal transport (OT) plan between a mixture of eight Gaussians (source) and the Two Moons dataset (target). %This setup reflects many real-world cases in which no single linear projection direction can fully capture the structure needed to recover the true OT plan.
While some of the Gaussian modes are properly matched by min-SWGG, others are matched to the more distant moon, due to information loss along the direction orthogonal to the projection.
To address this limitation, we apply DGSWP with a neural network parametrizing the projection function $\phi$, enabling more expressive projections. 
% In our experiment, we use a single-hidden-layer neural network, which yields a significantly more accurate transport plan. \lc{supprimer ceette phrase ?} 
This improvement is reflected in the quantitative results: the transport cost associated with the DGSWP plan is notably closer to the squared Wasserstein distance than that of the min-SWGG method.\looseness=-1

\begin{wrapfigure}{r}{0.5\textwidth}
    \centering
    \vspace{-20pt} % Adjust vertical spacing if needed
    \includegraphics[width=\linewidth]{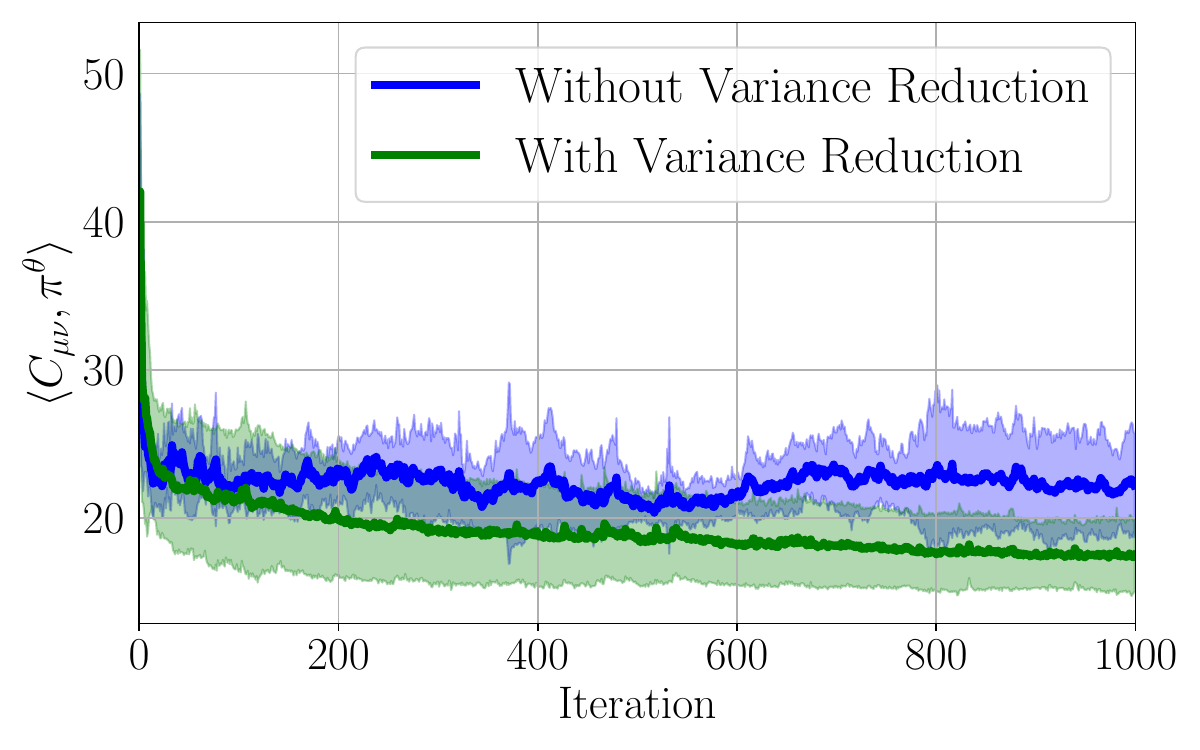}
    \caption{
        Impact of the variance reduction scheme (first 1,000 iterations).%from Eq.~\eqref{eq:grad_h} 
    }
    \label{fig:var_red}
    \vspace{-10pt} % Optional: reduce space below the figure
\end{wrapfigure}

This transport plan above is obtained using the approach outlined in Sec.~\ref{sec:DGSWP}. 
To further evaluate the impact of the variance reduction technique introduced in Eq.~\eqref{eq:grad_h}, we now repeat the same experiment across 10 different random initializations of the neural network. 
The average learning curves for the variants with and without variance reduction are shown in Fig.~\ref{fig:var_red}.
The results clearly indicate that using variance reduction improves both the final transport cost and the stability of the learning process. 
% Specifically, the shaded regions representing the mean $\pm$ standard deviation are noticeably narrower when variance reduction is applied. 
Based on this evidence, we adopt the variance reduction strategy in all subsequent experiments.

% \begin{figure}[t]
%     \centering
%     \includegraphics[width=.6\textwidth]{fig/fig_varred.pdf}
%     \caption{
%         Impact of the variance reduction scheme from Eq.~\ref{eq:grad_h}.
%         % \rt{Actual plots yet to come}
%     }
%     \label{fig:var_red}
% \end{figure}

\subsection{Gradient flows}

In these gradient flow experiments, our objective is to iteratively transport the particles of a source distribution toward a target distribution by progressively minimizing the DGSWP objective.

%\subsubsection{Without manifold assumption}

%To assess the effectiveness of our approach, 
\looseness=-1
\textbf{Without manifold assumption.} We compare DGSWP with a linear and NN-based $\phi$ mapping against several baseline methods: Sliced Wasserstein distance (SWD), Augmented Sliced Wasserstein Distance, that has been shown in~\cite{chen2022augmented} to outperform GSW methods, %(ASWD, which learns an embedding prior to computing the sliced Wasserstein distance) 
and min-SWGG, evaluated in both its random search and optimization-based forms. Fig.~\ref{fig:gf_4datasets} presents results across a range of target distributions designed to capture diverse structural and dimensional characteristics. 
In all experiments, the source distribution is initialized as a uniform distribution within a hypercube. 
We set the number of samples at $n = 50$, and repeat each experiment over 10 random seeds, % to account for variability, 
reporting the median transport cost along with the first and third quartiles. We use the same learning rate for all experiments.% \rt{TODO: add details on hyperparameters, at least state that we use use the same LR for all or this kinds of things}

\looseness=-1
In two-dimensional settings, DGSWP consistently converges to a meaningful solution, whereas methods based on the min-SWGG objective sometimes fail to do so, even when using their original optimization scheme. 
More notably, in the high-dimensional regime, DGSWP stands out as the only slicing-based method capable of producing satisfactory transport plans, underscoring its robustness and scalability in challenging settings.
This holds true even though min-SWGG is theoretically equivalent to Wasserstein when the dimension is high enough (here we have $d\geq2n$); in practice, its optimization often struggles to find informative directions. 
In contrast, our proposed optimization strategy succeeds even in the linear case (where the formulation effectively reduces to min-SWGG) highlighting the practical benefit of our approach.
% \lc{+ optim of SWGG necessitates several repetitions. }

\begin{figure}[t]
    \centering
    \includegraphics[width=\textwidth]{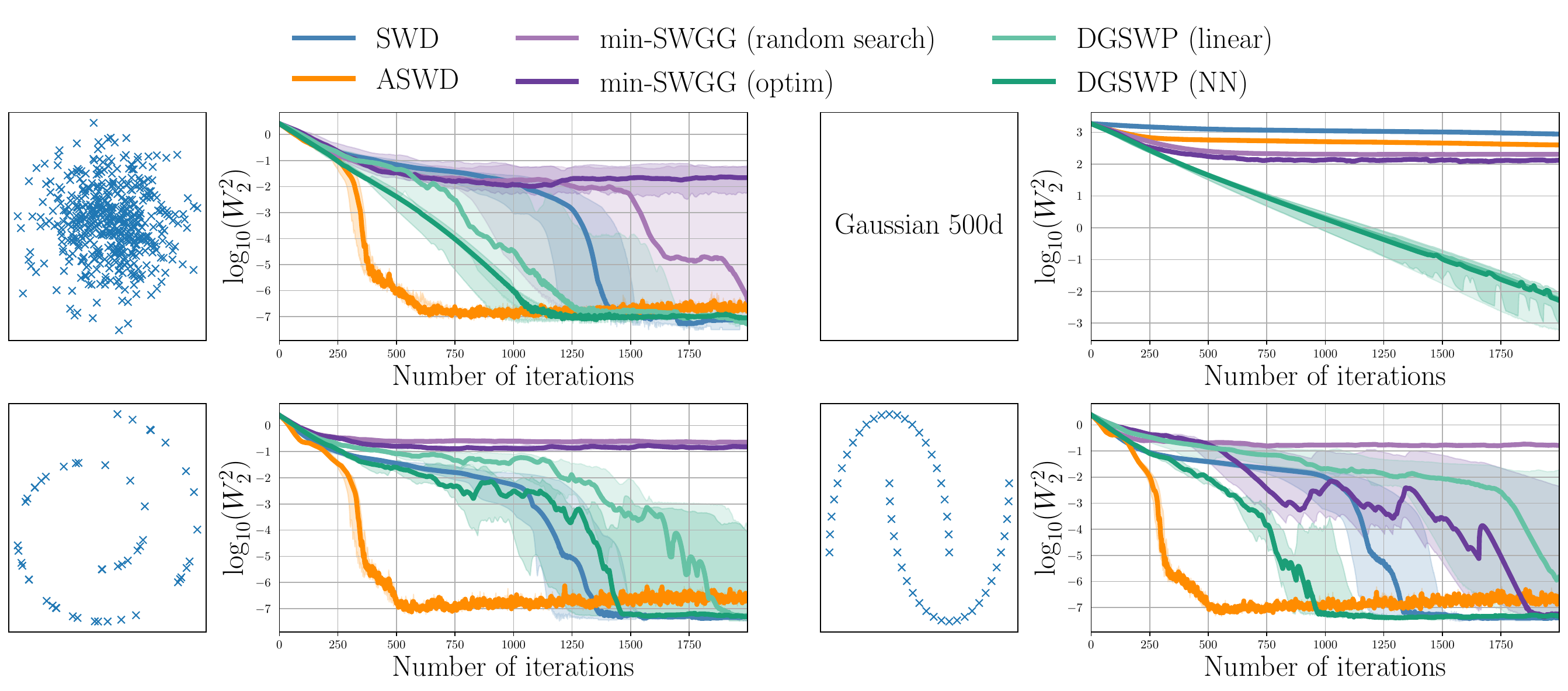}
    \caption{
        Log of the Wasserstein Distance as a function of the number of iterations of the gradient flow, considering several target distributions. The source distribution is uniform in all cases.% while the target distribution varies across the four subplots, each corresponding to a different dataset, illustrated on the left.
        %\rt{TODO Romain : faire qqch avec les légendes de Fig qui sont toutes moins informatives les unes que les autres}
    }
\label{fig:gf_4datasets}
\end{figure}

%\subsubsection{On hyperbolic spaces}
\textbf{On hyperbolic spaces.} Hyperbolic spaces are Riemannian manifolds of negative constant curvature~\citep{lee2006riemannian}. They manifest in several representations and we consider here the Poincaré ball of dimension $d$, $\mathbb{B}^d$. We aim to construct a manifold feature map for hyperbolic space $\phi : \mathbb{B}^d \times \mathbb{B}^q \to \mathbb{R}$, typically relying on an analogue of hyperplanes. Horospheres~\citep{heintze1977geometry} generalize hyperplanes in the Poincaré ball and are parametrized by a base point located on the boundary of the space $\theta \in \partial \mathbb{B}^d=\mathbb{S}^{d-1}$. Akin to~\cite{katsman2023riemannian}, we consider a map that corresponds to horosphere projection $\phi(x, \theta) = \log(\| x-\theta\|_2^2) - \log(1 - \| x\|_2^2)$. 

Similarly to~\cite{pmlr-v221-bonet23a}, we assess the ability of generalized sliced Wasserstein plans to learn distributions that live in the Poincaré disk. We compare with the horospherical sliced-Wasserstein discrepancy (HHSW,~\cite{pmlr-v221-bonet23a}) and Sliced Wasserstein computed on the Poincaré ball. For the gradient estimation, we rely on the von Mises-Fisher which is a generalization of a Gaussian distribution from $\mathbb{R}^d$ to $\mathbb{S}^{d-1}$ and perform a Riemannian Gradient Descent using Python toolbox \texttt{geoopt}~\citep{geoopt2020kochurov} to optimize on $\theta$ and the source data (that live on $\mathbb{B}^{d}$). Figure~\ref{fig:GFhyperbolic} plots the evolution of the exact log 10-Wasserstein
distance between the learned distribution and the target, using the geodesic distance as ground cost. 
We use wrapped normal distributions as source % (mean 1, standard deviation $\mathrm{Id}$) 
and set the same learning rate for all methods. 
DGWSP enables fast convergence towards the target, demonstrating its versatility for hyperbolic manifolds. 
%\rt{What does the source distrib look like in these experiments?}

\begin{figure}
\centering
\begin{subfigure}{0.495\textwidth}
    \includegraphics[width=\textwidth]{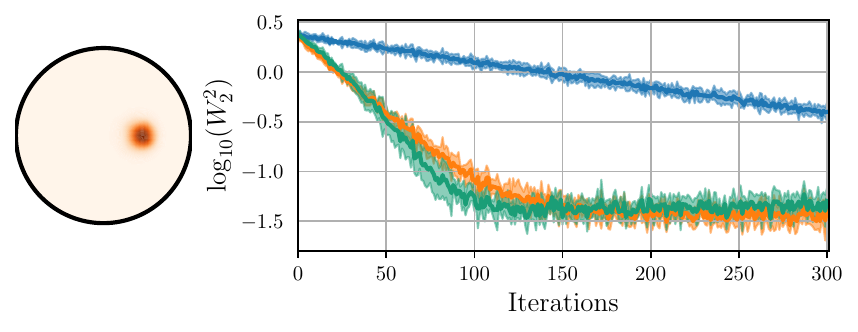}
\end{subfigure}
\begin{subfigure}{0.495\textwidth}
 \includegraphics[width=\textwidth]{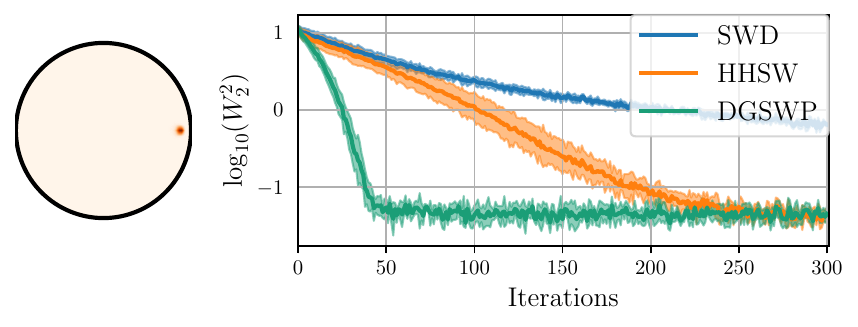}
\end{subfigure}
\caption{Log of the WD (second and fourth panels) for two different targets (first and third ones) as wrapped normal distributions for HHSW,  SWD and DGWSP. %The source data are sampled according to a wrapped normal distribution of mean 1 and identity matrix for the standard deviation.
}
\label{fig:GFhyperbolic}
\end{figure}

\subsection{Sliced-OT based Conditional Flow Matching}

\looseness=-1
We now investigate the use of DGSWP in the context of generative modeling. %, following the flow matching (FM) paradigm. 
In the flow matching (FM) framework, a time-dependent velocity field $u_t(x)$ is learned such that it defines a continuous transformation from a prior to the data distribution. In practice, $u_t$ is trained by minimizing a regression loss on synthetic trajectories sampled from a known coupling between source and target distributions that determines the starting and ending points $(x_0, x_1)$ of the trajectory. %, which are used to train the model.
Several variants of flow matching have been proposed depending on how these pairs are sampled. In Independent CFM (I-CFM,~\cite{lipman2023flow}), pairs are sampled independently from the prior and data distributions, respectively. 
However, this approach ignores any explicit alignment between source and target samples. 
To address this, OT-CFM~\cite{tong2023improving} proposes to deterministically couple source and target samples using mini-batch OT, so that $(x_0, x_1) \sim \pi$ where $\pi$ is the OT plan computed between a batch of prior samples and a batch of data samples. 
This coupling tends to straighten the diffusion trajectories, which leads to improved generation quality in few-step sampling regimes.

\looseness=-1
Despite its advantages, OT-CFM relies on a trade-off: since computing exact OT is infeasible for large datasets,  mini-batch OT is used, which leads to imperfect matchings, especially in high-dimensional spaces involving complex distributions, for which a mini-batch is unlikely to be representative of the whole distribution.
This motivates the use of DGSWP as an alternative.
By estimating transport plans using DGSWP, we can leverage significantly larger batches, resulting in better couplings, while maintaining low computational cost. 
% This allows us to form better couplings without being constrained by the high cost of exact OT. 
In our experiments, we aggregate samples from 10 mini-batches to compute the transport plan, and find that the additional computational cost remains negligible. 
Our approach is also supported by the findings of~\citet{cheng2025curse}, who show that increasing the batch size improves performance for OT-CFM in the few-sampling-steps regime.

\looseness=-1
%\begin{wraptable}{l}{0.6\textwidth}
\begin{table}
    \centering
    %\vspace{-10pt} % Optional: adjust vertical spacing
    \begin{tabular}{lcccc}
        \toprule
        {Integration method $\rightarrow$} & \multicolumn{2}{c}{Euler} & \multicolumn{2}{c}{DoPri5} \\
        \cmidrule(lr){2-3} \cmidrule(lr){4-5}
        Algorithm $\downarrow$ & FID & NFE & FID & NFE \\
        \midrule
        I-CFM & 4.61 $\pm$ 0.10 & 100 & 3.74 $\pm$ 0.04& 138.75 $\pm$ 3.47\\
        OT-CFM & 4.75 $\pm$ 0.05& 100 & 3.81 $\pm$ 0.04& 132.56 $\pm$ 0.67\\
        DGSWP-CFM (linear) & 4.16 $\pm$ 0.09 & 100 & 4.45 $\pm$ 0.08 & 112.63 $\pm$ 2.32\\
        DGSWP-CFM (NN) & \textbf{3.62} $\pm$ 0.05& 100 & 3.95 $\pm$  0.09 & 120.04 $\pm$ 2.30\\
        \bottomrule
    \end{tabular}
    \caption{Average FID score \lc{($\pm$ std)} and average number of function evaluations (NFE) per batch, \lc{computed over 3 runs.}}% for each algorithm / integration method.}
    \label{tab:fid_scores}
    %\vspace{-10pt} % Optional: adjust space below the figure
%\end{wraptable} 
\end{table}
\begin{wrapfigure}{r}{0.5\textwidth}
    \centering
    \vspace{-20pt} % Optional: adjust vertical spacing
    \includegraphics[width=\linewidth]{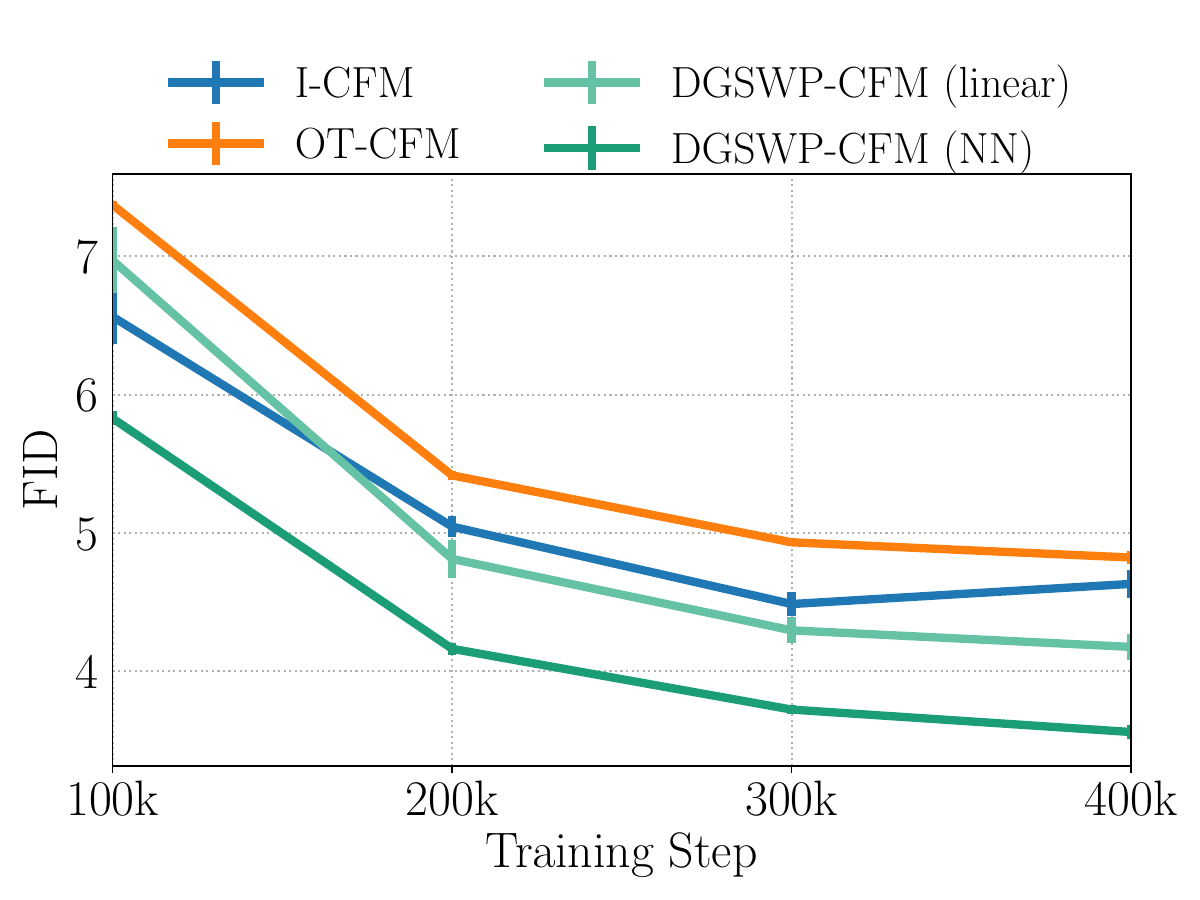}
    \caption{Average FID (over 3 runs) as a function of training iterations for various algorithms using 100-step Euler sampling. \lc{Vertical lines represent the standard deviation.}}
    \label{fig:fid_plot}
    % \vspace{-30pt} % Optional: adjust space below the figure
\end{wrapfigure}
We conduct experiments on CIFAR-10~\cite{krizhevsky2009learning}, reporting FID scores for DGSWP-CFM, OT-CFM, and I-CFM across varying numbers of sampling steps. 
We use the experimental setup and hyperparameters from~\citet{tong2023improving}. For DGSWP, we evaluate both a linear projector and a more expressive non-linear embedding implemented by a neural network.
We compare a fixed-step Euler solver with the adaptive Dormand-Prince method~\cite{dormand1980family} for trajectory integration. %The results in Table~\ref{tab:fid_scores} underscore the importance of learning straight transport trajectories, which translates to more efficient generation:
The results in Table~\ref{tab:fid_scores} underscore the importance of learning straight transport trajectories, which translates to more efficient generation: 
models that induce straighter flows require significantly fewer function evaluations (NFE) to achieve high-quality samples. 
Notably, even a simple 100-step Euler scheme proves highly effective when used with DGSWP-CFM, demonstrating the practicality of the method for fast generation. %\rt{TODO reference Fig and Table}

A closer analysis of the results leads to three key observations:
(i) Among the projection choices, linear DGSWP underperforms compared to its neural network-based counterpart, highlighting the benefit of extending min-SWGG with non-linear projections;
(ii) When using the adaptive Dormand–Prince solver, DGSWP achieves performance comparable to OT-CFM in terms of FID, but with lower computational cost as indicated by the reduced number of function evaluations;
(iii) In the fixed-step regime, DGSWP clearly outperforms all baselines under the Euler solver, offering the best trade-off between sample quality and efficiency, as illustrated in Fig.~\ref{fig:fid_plot}.

\section{Conclusion}
\looseness=-1
This paper presents a novel differentiable approach to approximate sliced Wasserstein plans, incorporating non-linear projections. Its differentiable and GPU-efficient formulation enables the definition of optimal projections. The proposed method offers several key advantages: i) efficient computation comparable to that of SWD, ii) ability to provide an approximated transport plan, akin to min-SWGG, iii) improved performance in high-dimensional settings thanks to its improved optimization strategy, iv) the capacity to handle data supported on manifolds through a generalized formulation. To the best of our knowledge, we also provide the first empirical evidence that slicing techniques can be effectively used in conditional flow matching, resulting in better performance and fewer function evaluations.

\looseness=-1
A key strength of slicing-based approaches for approximating OT lies in their favourable statistical properties, such as improved sample complexity~~\cite{nadjahi2020statistical}. These advantages also extend to projection-based methods on $k$-dimensional subspaces~\cite{paty2019subspace}, as well as to min-SWGG~\cite{mahey2024unbalanced}. Future works will explore the conditions that the projection map $\phi$ must satisfy to preserve these statistical guarantees. Additionally, ensuring the injectivity of $\phi$ is essential for DGSWP to define a proper distance. Investigating injective neural architectures, such as those proposed in~\cite{puthawala2022globally}, is a promising research direction.

\section*{Acknowledgments}
SV is supported by ANR PRC MAD ANR-24-CE23-1529.
LC and RT are supported by ANR PRCE ANR-25-CE23-6035.

%%%%%%%%%%%%%%%%%%%%%%%%%%%%%%%%%%%%%%%%%%%%%%%%%%%%%%%%%%%%
\bibliographystyle{plainnat}
\bibliography{biblio.bib}

%\input{sections/checklist}
%%%%%%%%%%%%%%%%%%%%%%%%%%%%%%%%%%%%%%%%%%%%%%%%%%%%%%%%%%%%
\appendix
\newpage{}
\section{Appendix / supplemental material}

\subsection{Proofs}\label{app:proofs}

We start by the proof of Lemma~\ref{lem:scaling}, concerned with 0-homogeneity of our target $\theta \mapsto h(\theta)$.
\begin{proof}[Proof of Lemma~\ref{lem:scaling}]
We prove in fact that for all $c > 0$, $\pi^\theta = \pi^{c\theta}$.
This is a consequence of the fact that if $\theta \mapsto \phi(x, \theta)$ is 1-homogeneous, then so is $\theta \mapsto C_{\mu\nu}^\phi(\theta)$.
Thus, the level sets of $\pi \mapsto g(\pi, c\theta)$ are the level-sets of $\pi \mapsto g(\pi, \theta)$ dilated by a factor $c$. Hence, they share the same minimizers, and in consequence $h(c \theta) = h(\theta)$.
\end{proof}
For the sake of completeness, we also prove the comment stating that a gradient flow on $h$ preserves the norm of the initialization with the following lemma.
\begin{lemma}\label{lem:gfpreserve}
    Let $U \subseteq \mathbb{R}^q$ an open set, assume that $h: U \mapsto \mathbb{R}$ is differentiable and $h$ is 0-homogeneous, i.e., $h(c\theta) = h(\theta)$.
    Consider the gradient flow dynamics
    \begin{equation}\label{eq:gflow}
        \left\{
            \begin{array}{rl}
                \theta(0) &= \theta_0 \in U  \\
                 \dot \theta(t) &= - \nabla h(\theta(t)) .
            \end{array}
        \right.
    \end{equation}
    Then, there exists an interval $I \subseteq \mathbb{R}_{\geq0}$ and a unique solution $t \mapsto \theta(t)$ such that for all $t \in I$, $\langle \dot \theta(t), \theta(t) \rangle = 0$ and $\| \theta(t) \| = \theta_0$.
\end{lemma}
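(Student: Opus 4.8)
The plan is to derive the conservation of $\|\theta(t)\|$ from Euler's identity for $0$-homogeneous functions, after first pinning down local existence and uniqueness of the flow via the Cauchy--Lipschitz theorem.

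First I would settle well-posedness of~\eqref{eq:gflow}. Assuming (as is implicit in the statement, and holds in the regime of interest) that $\nabla h$ is locally Lipschitz on the open set $U$, the vector field $\theta \mapsto -\nabla h(\theta)$ is locally Lipschitz, so Cauchy--Lipschitz provides a maximal time $T \in (0,+\infty]$, an interval $I = [0,T)$, and a unique solution $t \mapsto \theta(t) \in U$ of~\eqref{eq:gflow}. This yields the existence-and-uniqueness half of the claim.

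Next I would record Euler's identity: for any $\theta \in U$, openness of $U$ makes $c \mapsto h(c\theta)$ defined on a neighbourhood of $c=1$, and $0$-homogeneity makes it constant there; differentiating at $c=1$ with the chain rule gives $\langle \nabla h(\theta), \theta\rangle = 0$. Evaluating along the trajectory, for every $t \in I$ we get
\[
    \langle \dot\theta(t), \theta(t)\rangle
    = -\langle \nabla h(\theta(t)), \theta(t)\rangle
    = 0 ,
\]
which is exactly the orthogonality statement. From this, $\tfrac{d}{dt}\|\theta(t)\|^2 = 2\langle \dot\theta(t), \theta(t)\rangle = 0$, so $\|\theta(t)\|^2$ is constant on $I$, equal to $\|\theta_0\|^2$; taking square roots gives $\|\theta(t)\| = \|\theta_0\|$ for all $t \in I$.

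The hard part will not be the computation --- which is one line --- but the regularity bookkeeping around uniqueness: mere differentiability of $h$ gives existence (Peano) but not uniqueness, so one must either promote the hypothesis to a locally Lipschitz gradient or, when $\phi$ is only piecewise smooth, restrict to a connected open region on which $h$ is $C^{1,1}$ (in the spirit of the comment following Lemma~\ref{lem:stein}). A minor secondary point is to check that the rays $c\theta$ used in Euler's identity stay inside $U$, which is immediate from openness since only $c$ near $1$ is needed.
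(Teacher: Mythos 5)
Your proof is correct and follows essentially the same route as the paper's: Euler's identity for $0$-homogeneous functions via differentiating $c \mapsto h(c\theta)$ at $c=1$, Cauchy--Lipschitz for existence and uniqueness, and the derivative of $\|\theta(t)\|^2$ for norm conservation. Your remark that mere differentiability of $h$ does not license Cauchy--Lipschitz (so one should assume $\nabla h$ locally Lipschitz) is in fact more careful than the paper, which invokes the theorem under the stated hypotheses without comment.
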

\begin{proof}
\emph{Orthogonal gradient of 0-homogeneous function.}
Consider the real function $\psi : \mathbb{R}  \to \mathbb{R}$ defined by $\psi(c) = h(c\theta)$.
By 0-homogeneity, $\psi(c) = h(\theta) = \psi(1)$.
Hence, $\psi$ is constant on $\mathbb{R}^*$, thus $\psi'(c) = 0$ for all $c \neq 0$.
Using the chain rule for real functions, we have that for all $c \neq 0$
\[
    \psi'(c) = 
    \langle (c \mapsto c \theta)'(c), \nabla h(c\theta) \rangle = 
    \langle \theta, \nabla h(c\theta) \rangle = 0.
\]
Using this fact for $c=1$, we conclude that
\begin{equation}\label{eq:orthograd}
    \forall \theta \in U, \quad
    \langle \theta, \nabla h(\theta) \rangle = 0 .
\end{equation}

\emph{Orthogonal dynamics.}
The existence and uniqueness of the Cauchy problem~\eqref{eq:gflow} comes from the Cauchy-Lipschitz theorem.
Consider this solution $t \mapsto \theta(t)$ defined over $I$.
Then,
\[ 
    \langle \dot \theta(t), \theta(t) \rangle =
    - \langle \nabla h(\theta(t)) , \theta(t) \rangle
    = 0,
\]
using~\eqref{eq:orthograd}.
Hence, $\dot \theta(t) \perp \theta(t)$ for all $t \in I$.

\emph{Conservation of the norm.}
Consider $r(t) = \| \theta(t) \|^2$.
The chain rule tells us that for all $t \in I$,
\[
    r'(t) = 2 \langle \dot \theta(t), \theta(t) \rangle ,
\]
hence $r' = 0$ and thus $r$ is a constant function.
\end{proof}

The proof of Proposition~\ref{prop:distance} relies on the fact that the $p$-Wasserstein distance is a metric on 1D measures, and that $\phi^\theta$ is conveniently supposed to be injective over the reference set $\mathcal{X}$.
\begin{proof}[Proof of Proposition~\ref{prop:distance}]
  Let $p>1$, $\theta \in \mathbb{R}^{q}$, and assume that $\phi^{\theta}$ is an injective map from $\mathbb{X}$ \rt{$\mathcal{X}$?} to $\mathbb{R}$.
  Let $\mu, \nu, \xi$ three discrete distributions in $\mathcal{P}(\mathcal{X})$.
  Recall that
  \[
    d^\theta(\mu,\nu) =
    W_p( \phi_\sharp^\theta \mu, \phi_\sharp^\theta \nu) .
  \]

  Using the fact that $W_{p}$ is a metric on $\mathcal{P}_{p}(\mathbb{R})$, we also obtain that $W_{p}$ is a metric on $\mathcal{P}(\mathcal{X})$ as a restriction.

  \textbf{Well-posedness.}
  Since $d^\theta(\mu,\mu) = W_p( \phi_\sharp^\theta \mu, \phi_\sharp^\theta \mu)$, and that $W_{p}$ is a metric, we have that $d^\theta(\mu,\mu) = 0$.

  \textbf{Symmetry.} The symmetry comes directly from the one of $W_{p}$.

  \textbf{Positivity.}
  Suppose that $\mu \neq \nu$.
  Since $W_p$ is a metric, we only need to prove that $\phi_\sharp^\theta \mu \neq \phi_\sharp^\theta \nu$.
  Assuming that, where $x_i \neq x_{i'}$ for all $i \neq i'$ and $y_j \neq y_{j'}$ for all $j \neq j'$,
  \[
    \mu = \sum_{i=1}^n a_i \delta_{x_i} \quad \text{and} \quad
    \nu = \sum_{j=1}^m b_j \delta_{y_j} ,
  \]
  we have that
  \[
    \phi_\sharp^\theta \mu = \sum_{i=1}^n a_i \delta_{\phi^\theta(x_i)} \quad \text{and} \quad
    \phi_\sharp^\theta \nu = \sum_{j=1}^m b_j \delta_{\phi^\theta(y_j)} .
  \]
  Using the injectivity of $\phi^\theta$, we have that $\phi^\theta(x_i) \neq \phi^\theta(x_{i'})$ for all $i \neq i'$ and $\phi^\theta(y_j) \neq \phi^\theta(y_{j'})$ for all $j \neq j'$.
  Hence, $\phi_\sharp^\theta \mu = \phi_\sharp^\theta \nu$ if and only if $n = m$ and there exists a permutation  $\sigma: \{1, \dots, n \} \to \{1, \dots, n \}$ such that
  \[
    a_i = b_{\sigma(i)} \quad \text{and} \quad
    \phi^\theta(x_i) = \phi^\theta(y_{\sigma(i)}), \text{ for all } i.
  \]
  But then, using the injectivity of $\phi^\theta$, we have $\{ x_i \}_{i=1}^n = \{ y_i \}_{i=1}^n$. Hence, $\mu = \nu$ which is a contradiction.

  \textbf{Triangle inequality.}
  We have that $d^\theta(\mu,\nu) = W_p( \phi_\sharp^\theta \mu, \phi_\sharp^\theta \nu)$.
  Using the triangle inequality on $W_{p}$, we have that
  $d^\theta(\mu,\nu) \leq W_p( \phi_\sharp^\theta \mu, \phi_\sharp^\theta \xi) + W_p( \phi_\sharp^\theta \xi, \phi_\sharp^\theta \nu) = d^\theta(\mu,\xi) + d^\theta(\xi,\nu)$.
\end{proof}

We now turn to Stein's lemma. The proof of Stein's lemma under a (weak) differentiability criterion~\citep{stein1981estimation} is classic, and relies on integration by parts and the properties of the normal distribution. 
Nevertheless, we are concerned with a function $\theta \mapsto h(\theta)$ that typically will have discontinuities, breaking the classical proof.
Note that one cannot expect Stein's lemma to hold true for any kind of discontinuities, even with almost everywhere differentiability.
The celebrated example is the Heaviside function $h(\theta) = 1_{\theta \geq 0}$ in 1D where Stein's lemma needs a correction term if there is a non-negligible number of them in the sense of the $\mathcal{H}^{q-1}$ Hausdorff dimension.
This setting was studied by~\citep{hansen2014degrees,tibshirani2015degrees,HANSEN201876} for various applications in statistics and signal processing, in particular for Stein's unbiased risk estimation.
\begin{proof}[Proof of Lemma~\ref{lem:stein}]
The proof of this result is mostly contained in~\citep[Proposition 1]{HANSEN201876}.
We outline the strategy.
Assumption~\ref{assu:hausdorff} requires having $\mathcal{H}^{q-1}(\mathbb{R}^q \setminus C) = 0$.
Hence, for all $i \in \{1,\dots,q\}$ and Lebesgue almost all $(\theta_1, \dots, \theta_{i-1}, \theta_{i+1}, \dots, \theta_q) \in \mathbb{R}^{q-1}$, the map
\[
    t \mapsto h(\theta_1, \dots, \theta_{i-1}, t, \theta_{i+1}, \dots, \theta_q)
\]
is absolutely continuous on every compact interval of $\mathbb{R}$.
So, in turn, $h \in W_{\text{loc}}^{1,1}(\mathbb{R}^q)$ which in turn shows that $h$ is almost differentiable in the sense of~\citet{stein1981estimation} and we can thus apply~\citep[Lemma 1]{stein1981estimation}.
\end{proof}

We now turn to the proof of Proposition~\ref{prop:diff}, regarding the properties of the smoothed version $h_\varepsilon$ of $h$.
\begin{proof}[Proof of Proposition~\ref{prop:diff}]
(\emph{Admissibility.})
Let $\varepsilon > 0$ and $\theta \in \mathbb{R}^q$.
Recall that
\[\pi_\varepsilon^\theta = \mathbb{E}_{Z \sim \mathcal{N}(0,I_q)} \left[ \arg\min_{\pi \in U(a,b)} g(\pi,\theta + \varepsilon Z) \right] .\]
Denote by $u(z) = \arg\min_{\pi \in U(a,b)} g(\pi,\theta + \varepsilon z)$, hence $\pi_\varepsilon^\theta = \mathbb{E}_{Z \sim \mathcal{N}(0,I_q)} [u(Z)]$.
For all $z \in \mathbb{R}^q$, $u(z) \in U(a,b)$ by definition of the minimization problem.
Hence,
\[
    \pi_\varepsilon^\theta = \int_{\mathbb{R}^q} u(z) \rho(z) \mathrm{d}z , 
\]
where $\rho(z) = (2 \pi)^{-q/2} \exp(-\tfrac{1}{2} \| x \|^2)$ is the probability density function of the multivariate normal law and $\mathrm{d} z$ is the Lebesgue measure on $\mathbb{R}^q$.
Since $U(a,b)$ is convex and $u(z) \in U(a,b)$, then $\int_{\mathbb{R}^q} u(z) \rho(z) \mathrm{d}z \in U(a,b)$ also.
In turn, since $\pi_\varepsilon^\theta  \in U(a,b)$, then given the true solution $\pi_{\text{OT}}^\star$ of~\eqref{eq:ot}, we have $\sum_{i=1}^n\sum_{j=1}^m \pi_{\text{OT}, i,j}^\star \| x_i - y_j \|_p^p \leq \sum_{i=1}^n\sum_{j=1}^m \pi_{i,j}^{\theta} \| x_i - y_j \|_p^p$

(\emph{Differentiability.})
This is a direct consequence of Lemma~\ref{lem:stein} and Lebesgue dominated convergence theorem to invert expectation and derivative.

(\emph{Consistency.})
The first fact is a consequence of Lebesgue dominated convergence theorem.
The second one use the expression of the gradient through the variance reduction expression:
\[
    \nabla_\theta h_\varepsilon(\theta) = \varepsilon^{-1} \mathbb{E}_Z[(h(\theta + \varepsilon Z) - h(\theta)) Z] = \mathbb{E}_Z \left[ \frac{h(\theta + \varepsilon Z) - h(\theta)}{\varepsilon} Z \right].
\]
Hence, again using Lebesgue dominated convergence theorem, we have
\[
\lim_{\varepsilon \to 0}
\nabla_\theta h_\varepsilon(\theta)
= 
\lim_{\varepsilon \to 0}
\mathbb{E}_Z \left[ \frac{h(\theta + \varepsilon Z) - h(\theta)}{\varepsilon} Z \right]
=
\mathbb{E}_Z \left[ \lim_{\varepsilon \to 0}  \frac{h(\theta + \varepsilon Z) - h(\theta)}{\varepsilon} Z \right] .
\]
Recognizing the directional derivative of $h(\theta)$ if $h$ is differentiable at $\theta$, we get that
\[
\lim_{\varepsilon \to 0}
\nabla_\theta h_\varepsilon(\theta)
=
\mathbb{E}_Z \left[ \langle \nabla h(\theta), Z \rangle Z \right] = \nabla h(\theta) .
\]

(\emph{Distance.}) We assume here that $\phi^\theta$ is injective on $\mathcal{X}$. We split the proof for $h$ (\textbf{1.}) and $h_\varepsilon$ (\textbf{2.}).

\textbf{1.} Proof that $(\mu,\nu) \mapsto h(\theta)(\mu,\nu)$ is a distance over $\mathcal{P}(\mathcal{X})$.
The \emph{positivity} comes from the suboptimality of $\pi^\theta(\mu,\nu)$, that is $h(\theta)(\mu,\nu) \geq W_p^p(\mu,\nu) > 0$ if $\mu \neq \nu$ (as $W_p$ is a metric itself).
The \emph{symmetry} comes from the fact that $C_{\mu\nu}$ is symmetric and that $\pi^\theta(\mu,\nu) = (\pi^\theta(\nu,\mu))^\top$.
Regarding the \emph{well-posedness}, since $\phi^\theta$ is injective, then $W_p(\phi_\sharp^\theta \mu, \phi_\sharp^\theta \mu) = 0$ and $\pi^\theta(\mu,\mu)$ is the identity matrix. Hence, 
\[
    h(\theta)(\mu,\mu) 
    = \langle C_{\mu\mu}, \pi^\theta(\mu,\mu) \rangle
    = \sum_{i=1}^n \sum_{j=1}^n (C_{\mu\mu})_{ij} \pi_{ij}^\theta(\mu,\mu)
    = \sum_{i=1}^n (C_{\mu\mu})_{ii} = 0, 
\]
since for all $i$, $(C_{\mu\mu})_{ii} = \| x_i - x_i \|_p^p = 0$.
Concerning the \emph{triangle inequality}, let $\mu_1, \mu_2, \mu_3 \in \mathcal{P}(\mathcal{X})$. Let us denote
\[
     \pi^{12} = \pi^\theta(\mu_1,\mu_2) \in \mathbb{R}^{n_1 \times n_2}, \quad
     \pi^{13} = \pi^\theta(\mu_1,\mu_3) \in \mathbb{R}^{n_1 \times n_3}, \quad
     \pi^{23} = \pi^\theta(\mu_2,\mu_3) \in \mathbb{R}^{n_2 \times n_3} 
\]
Using the specific structure of the 1D optimal transport~\cite{santambrogio2015optimal}, there exists a tensor $\Pi \in \mathbb{R}^{n_1 \times n_2 \times n_3}$ of order 3 such that admits $\pi^{12}$, $\pi^{13}$ and $\pi^{23}$ as marginals, that is
\[
\begin{array}{lll}
     \forall i,j, & \pi^{12}_{i,j} &= \sum_{k=1}^{n_3} \Pi_{i,j,k}  \\
     \forall i,k, & \pi^{13}_{i,k} &= \sum_{j=1}^{n_2} \Pi_{i,j,k}  \\
     \forall j,k, & \pi^{23}_{j,k} &= \sum_{i=1}^{n_1} \Pi_{i,j,k} .
\end{array}
\]
Since this structure provides us a ``gluing lemma'', we continue the proof similarly to the standard proof of the triangular inequality of the Wasserstein distance.
\begin{align*}
    (h(\theta)(\mu_1,\mu_3))^{1/p}
    &= \left( \langle C_{\mu_1\mu_3}, \pi^{13} \rangle \right)^{1/p} & \\
    &= \left( \sum_{i=1}^{n_1} \sum_{k=1}^{n_3} \pi_{ik}^{13} \| x_i - z_k \|_p^p  \right)^{1/p} & \text{by definition} \\
    &= \left( \sum_{i=1}^{n_1} \sum_{j=1}^{n_2} \sum_{k=1}^{n_3} \Pi_{ijk} \| x_i - z_k \|_p^p  \right)^{1/p} & \text{as glue}.
\end{align*}
Using that $\| x_i - z_k \|_p^p \leq \| x_i - y_j \|_p^p + \| y_j - z_k \|_p^p$, we get that
\[
    (h(\theta)(\mu_1,\mu_3))^{1/p} \leq 
    \left( \sum_{i=1}^{n_1} \sum_{j=1}^{n_2} \sum_{k=1}^{n_3} \Pi_{ijk} (\| x_i - y_j \|_p^p + \| y_j - z_k \|_p^p)  \right)^{1/p} .
\]
Applying now the Minkowski inequality, we obtain that
\[
    (h(\theta)(\mu_1,\mu_3))^{1/p} \leq 
    \left( \sum_{i=1}^{n_1} \sum_{j=1}^{n_2} \sum_{k=1}^{n_3} \Pi_{ijk} \| x_i - y_j \|_p^p  \right)^{1/p} +
    \left( \sum_{i=1}^{n_1} \sum_{j=1}^{n_2} \sum_{k=1}^{n_3} \Pi_{ijk} \| y_j - z_k \|_p^p  \right)^{1/p} .
\]
Using the fact that $\Pi$ has marginals $\pi^{12}$ and $\pi^{23}$, we get that
\[
    (h(\theta)(\mu_1,\mu_3))^{1/p} \leq 
    \left( \sum_{i=1}^{n_1} \sum_{j=1}^{n_2} \pi_{ij}^{12} \| x_i - y_j \|_p^p  \right)^{1/p} +
    \left( \sum_{j=1}^{n_2} \sum_{k=1}^{n_3} \pi_{jk}^{23} \| y_j - z_k \|_p^p  \right)^{1/p} .
\]
Hence,
\[
    (h(\theta)(\mu_1,\mu_3))^{1/p} \leq 
    (h(\theta)(\mu_1,\mu_2))^{1/p} + (h(\theta)(\mu_1,\mu_2))^{1/p} .
\]

\textbf{2.} Proof that $(\mu,\nu) \mapsto h_\varepsilon(\theta)(\mu,\nu)$ is a distance over $\mathcal{P}(\mathcal{X})$.
The \emph{well-posedness} comes from the fact that $h_\varepsilon(\theta)(\mu,\mu) = \mathbb{E}_Z [ h(\theta + \varepsilon Z)(\mu,\mu) ] = \mathbb{E}_Z [ 0 ] = 0$. The symmetry is also a direct consequence of the \emph{symmetry} of $h(\theta)(\mu,\nu)$ and the \emph{positivity} comes from the fact that the expectation of a positive quantity is positive (understood almost surely on $\mathbb{R}^q$). For the \emph{triangle inequality}, we use the linearity of the expectation: let $\mu_1,\mu_2,\mu_3 \in \mathcal{P}(\mathcal{X})$. Then, for all $\theta \in \mathbb{R}^q$, and for all $z \in \mathbb{R}^q$, using the fact that $h(\theta + \varepsilon z)$ is a distance
\[ h(\theta + \varepsilon z)(\mu_1,\mu_3)^{1/p} \leq h(\theta + \varepsilon z)(\mu_1,\mu_2)^{1/p} + h(\theta + \varepsilon z)(\mu_2,\mu_3)^{1/p} . \]
Hence, taking the expectation and using linearity gives that
\[ \mathbb{E}_Z [h(\theta + \varepsilon Z)(\mu_1,\mu_3)^{1/p}] \leq \mathbb{E}_Z [h(\theta + \varepsilon Z)(\mu_1,\mu_2)^{1/p}] + \mathbb{E}_Z [h(\theta + \varepsilon Z)(\mu_2,\mu_3)^{1/p}] . \]
\end{proof}

\subsection{Algorithm}
Algorithm~\ref{alg:gd} describes a gradient descent method to perform the minimization of $h_\varepsilon$ using the Monte-Carlo approximation from Eq.~\eqref{eq:grad_h}.
Thus, the overall cost of DGSWP is $O(TN (n+m) \log (n+m) + TN (n+m)d)$

\begin{algorithm}
\caption{Monte-Carlo gradient descent of $h_\varepsilon(\theta)$}
\label{alg:gd}
\begin{algorithmic}[1]
\Require $\theta_0 \in \mathbb{R}^q$, step size policy $\eta_t$, smoothing parameter $\varepsilon > 0$, number of Monte Carlo samples $N$, number of iterations $T$
\For{$t = 0$ to $T-1$}
    \State Sample i.i.d. perturbation vectors $z_1, \ldots, z_N \sim \mathcal{N}(0, \mathrm{Id}_q)$
    \For{$k = 1$ to $N$}
        \State Solve OT problems to obtain $\pi^{\theta_t + \varepsilon z_k}$ and $\pi^{\theta_t}$ \Comment{using 1D OT} solver
        \State $g_k \gets \langle C_{\mu\nu}, \pi^{\theta_t + \varepsilon z_k} - \pi^{\theta_t} \rangle$
    \EndFor
    \State $\widehat{\nabla} h_{\varepsilon,N}(\theta_t) \gets \frac{1}{\varepsilon N} \sum_{k=1}^N g_k z_k$ \Comment{approximate gradient}
    \State $\theta_{t+1} \gets \theta_t - \eta_t \widehat{\nabla} h_{\varepsilon,N}(\theta_t)$ \Comment{update parameter}
\EndFor
\State \Return $\theta_T$
\end{algorithmic}
\end{algorithm}

\subsection{Additional results and experiment details} \label{app:expe_details}
% \lc{Parler ici des Experiments compute resources + "The paper should provide the amount of compute required for each of the individual experimental runs as well as estimate the total compute." }

All experiments except the Conditional Flow Matching (CFM) were run on a MacBook Pro M2 Max with 32 GB of RAM. 
On this machine, Fig.~\ref{fig:illus_ot} took approximately 3 minutes per run (10,000 iterations), Fig.~\ref{fig:var_red} about 6 minutes for 10 runs (with two models trained sequentially, 1,000 iterations), Fig.~\ref{fig:gf_4datasets} required roughly 30 minutes, and Fig.~\ref{fig:GFhyperbolic} took around 10 minutes in total (all models considered, 10 repetitions). 
The CFM experiments were dispatched over a GPU cluster composed of GPU-A100 80G, GPU-A6000 48 Go, with a total runtime of 130h for training and inference of all presented models.
\rt{We observe that I-CFM is approximately 25\% faster than the other methods. Meanwhile, OT-CFM, min-DGSWP-CFM (linear), and min-DGSWP-CFM (NN) exhibit comparable runtimes. In practice, the DGSWP batch size (the number of minibatches gathered together to compute DGSWP mappings) serves as a hyperparameter of our method, and we set it to 10 specifically to align the runtime of our approach with that of OT-CFM.}
We estimate that the total compute time over the course of the project—including experimentation, debugging, and hyperparameter tuning—is approximately two orders of magnitude larger than the reported runtimes for CPU-based experiments, and one order of magnitude larger for the GPU-based experiments.

\subsubsection{Hyperparameter settings}

We report here the hyperparameter configurations used across the main experiments. 
Figures~\ref{fig:illus_ot} and~\ref{fig:var_red} correspond to the same experiment—Fig.~\ref{fig:var_red} highlights early training dynamics, while Fig.~\ref{fig:illus_ot} depicts results at convergence. 
The projection network used is a 3-layer MLP with ReLU activations: (with dimensions $2 \rightarrow 64 \rightarrow 16 \rightarrow 1$). 
Optimization is done using SGD with a learning rate of 0.2; for the variant without variance reduction, a lower learning rate of 0.0002 is used to ensure convergence (cf. Fig.~\ref{fig:var_red_same_lr} in which the same learning rate is used for both variants). 
%All results are averaged over 10 independent runs.
In Figure~\ref{fig:gf_4datasets} (gradient flow experiments), we perform 2000 outer flow steps using SGD with a learning rate of 0.01. 
At each flow step, we execute 20 projection steps (or inner optimization updates when using learnable projectors). 
For the latter, we use Adam with a learning rate of 0.01. 
The neural projector for our method is a single-hidden-layer MLP with ReLU activations and He initialization.
In Figure~\ref{fig:GFhyperbolic}, which investigates gradient flows on hyperbolic manifolds, we vary the outer learning rate across methods to account for differences in convergence speed: the base learning rate is 2.5, used for HHSW; SW uses a scaled learning rate of 17.5, and DGSWP uses a reduced rate of ~0.83. Each flow step is composed of 100 projection or inner optimization steps.
For the Conditional Flow Matching (CFM) experiment shown in Figures 6, \ref{fig:illus_cfm},~\ref{fig:fid_plot_dopri5} and Table \ref{tab:fid_scores}, we adopt the same training hyperparameters as in \citet{tong2023improving}. 
For our method specifically, the projection model is a 3-layer fully connected network with SELU activations: $3\times 32 \times 32 \rightarrow 256 \rightarrow 256 \rightarrow 1$. 
Its parameters are optimized using Adam with a learning rate of 0.01. 
We perform 1000 optimization steps for the projection model at initialization, followed by 1 step per CFM training iteration.

\subsubsection{Benchmarking min-SWGG and DGSWP (linear) at comparable time budget}
\begin{table}[h]
    \centering
    \begin{tabular}{lccc}
        \toprule
        Method & Iterations & $\log_{10} W^2_2$ & Time (s) \\
        \midrule
        % \multicolumn{4}{c}{\textit{Target time: 3.34s}} \\
        min-SWGG (optim)  & 500 & \textbf{-0.81} & 3.34 \\
        DGSWP (linear)  & 173  & -0.49  & 3.34 \\
        \midrule
        % \multicolumn{4}{c}{\textit{Target time: 6.67s}} \\
        min-SWGG (optim)  & 1000 & \textbf{-0.89} & 6.67 \\
        DGSWP (linear)  & 345  & -0.84  & 6.67 \\
        \midrule
        % \multicolumn{4}{c}{\textit{Target time: 10.00s}} \\
        min-SWGG (optim)  & 1500 & -0.80 & 10.00 \\
        DGSWP (linear)  & 516  & \textbf{-1.07}  & 9.99 \\
        \midrule
        % \multicolumn{4}{c}{\textit{Target time: 13.33s}} \\
        min-SWGG (optim)  & 2000 & -0.79 & 13.33 \\
        DGSWP (linear)  & 687  & \textbf{-1.28}  & 13.32 \\
        \bottomrule
    \end{tabular}
    \caption{Convergence comparison of min-SWGG and DGSWP (linear) methods under comparable time budgets for the gradient flow experiment on the Swiss Roll dataset. Bold values indicate the best performance at each time budget.}
    \label{tab:gf_timings}
\end{table}

In the gradient flow experiments presented in Section~\ref{sec:expes}, methods are compared in terms of convergence as a function of the number of gradient steps. However, when comparing min-SWGG and DGSWP (linear), one might question whether the gradient descent procedure in DGSWP (linear) introduces additional computational overhead.

To investigate this phenomenon, we present in Table~\ref{tab:gf_timings} a comparison of these methods under fixed time budgets. Specifically, we select the number of iterations for DGSWP (linear) such that the total runtime is less than or equal to the corresponding min-SWGG run. Our results demonstrate that min-SWGG achieves better performance in the early stages of optimization, while DGSWP (linear) becomes more accurate as the computational budget increases.

\subsubsection{Additional comparison to Expected Sliced Transport Plans}

\begin{figure}[t]
    \centering
    \includegraphics[width=\linewidth]{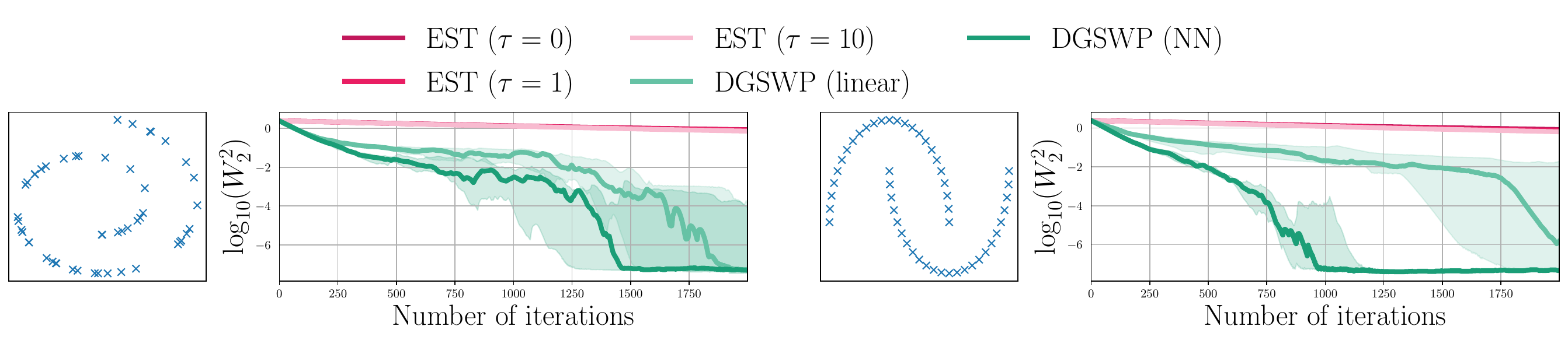}
    \caption{
        Comparison between DGSWP variants and EST on the gradient flow experiment.
    }
    \label{fig:gf_est}
\end{figure}

\begin{table}[t]
\centering
\begin{tabular}{l c}
\hline
\textbf{Method} & \textbf{Cost} \\
\hline
Exact OT & 13.50 \\
min-SWGG & 22.22 \\
Expected Sliced Transport Plans ($\tau=0$) & 32.28 \\
Expected Sliced Transport Plans ($\tau=1$) & 21.30 \\
Expected Sliced Transport Plans ($\tau=10$) & 20.09 \\
DGSWP (NN) & 13.80 \\
\hline
\end{tabular}
\caption{Comparison of methods by transportation cost. The OT problem is the 8Gaussians to Two Moons from Fig.~\ref{fig:illus_ot}. The cost reported in this table is the associated transportation cost, ie the cost $\langle C_{\mu \nu}, \pi \rangle$ where $\pi$ is provided by the method at stake.}
\label{tab:ot_cost_est}
\end{table}

\rt{
In the core of the paper, we have considered min-SWGG, SWD and ASWD as our main competitors. We include in this section an additional comparison to Expected Sliced Transport Plans (EST)~\cite{liu2024expectedslicedtransportplans}.
For the sake of the comparison, we vary their temperature hyperparameter in the set $\{0, 1, 10\}$ and observe performance both in the Gaussians $\rightarrow$ Moons transport problem (Table~\ref{tab:ot_cost_est}) and the gradient flow experiments (Fig.~\ref{fig:gf_est}).
We observe that larger temperature settings seem to improve EST performance, yet EST remain consistently outperformed by DGSWP.
}

\subsubsection{Additional results for the CFM experiment}

% In Fig.~\ref{fig:var_red}, a smaller learning rate is used for the variant that does not use variance reduction, to get a better learning curve. 
% When using the same learning rate as the one for the variance-reduced implementation, one gets Fig.~\ref{fig:var_red_same_lr}.

\begin{figure}[t]
    \centering
    \includegraphics[width=.5\linewidth]{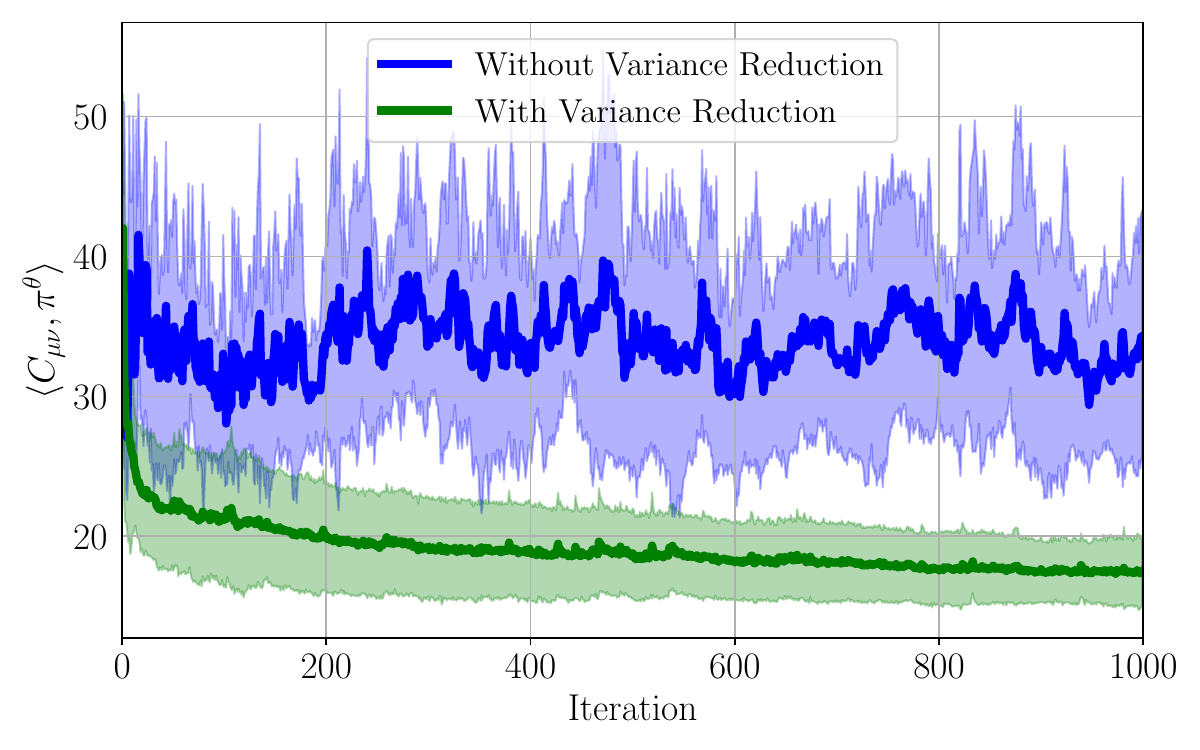}
    \caption{
        Impact of the variance reduction scheme from Eq.~\ref{eq:grad_h}. Here, the same learning rate is used for both variants, in which case the variant without variance reduction does not even converge.
    }
    \label{fig:var_red_same_lr}
\end{figure}

Fig.\ref{fig:illus_cfm} presents a set of generated images using I-CFM, OT-CFM and DGSWP (NN) after 400k iterations using the 100-step Euler integrator. 

\begin{figure*}[h!]
    \centering
    \begin{subfigure}[b]{0.32\textwidth}
        \centering
        \includegraphics[width=\linewidth]{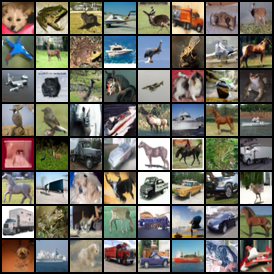}
        \caption{I-CFM}
    \end{subfigure}
    \begin{subfigure}[b]{0.32\textwidth}
        \centering
        \includegraphics[width=\linewidth]{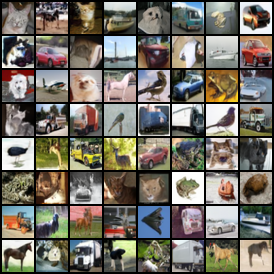}
        \caption{OT-CFM}
    \end{subfigure}
    \begin{subfigure}[b]{0.32\textwidth}
        \centering
        \includegraphics[width=\linewidth]{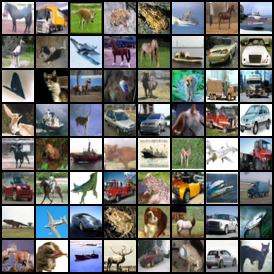}
        \caption{DGSWP-CFM}
    \end{subfigure}
    \caption{Example of generated images after 400k steps.}
    \label{fig:illus_cfm}
\end{figure*}

Fig.~\ref{fig:fid_plot_dopri5} presents the evolution of the image generation quality during training when using the adaptive-step Dormand-Prince strategy for the integration.

\begin{figure}[t]
    \centering
    \includegraphics[width=0.8\textwidth]{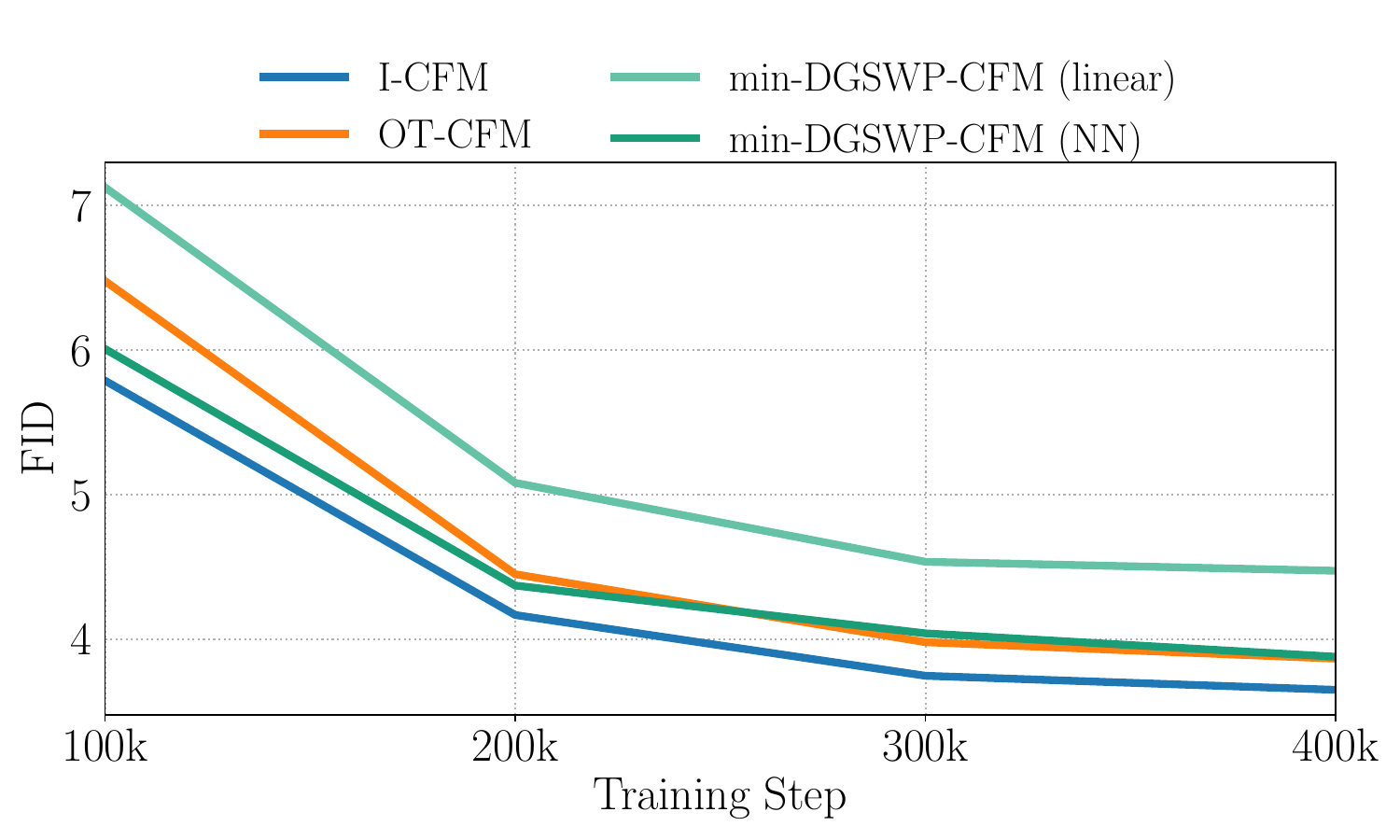}
    \caption{FID as a function of training iterations for various algorithms using adaptive-step DoPri5 sampling.
    }
    \label{fig:fid_plot_dopri5}
\end{figure}
% \rt{TODO: fig CFM DoPri5 + small images CIFAR10}

%\begin{table}
%    \centering
%    \vspace{-10pt} % Optional: adjust vertical spacing
%    \begin{tabular}{lccc}
%        \toprule
%         & {Training step} & \multicolumn{2}{c}{Evaluation step} \\
%        %\cmidrule(lr){2-3} \cmidrule(lr){4-5}
%         {Algorithm $\downarrow$} &  & Euler & DoPri5 \\
%        \midrule
%        I-CFM & $\approx$ 25h &  1h49 & 1h17 \\
%        OT-CFM & $\approx$ 43h &  55min & 1h13 \\
%        DGSWP-CFM (linear) & $\approx$ 25h  & 55min & 1h \\
%        DGSWP-CFM (NN) & $\approx$ 28h & 55min & 1h06 \\
%        \bottomrule
%    \end{tabular}
%    \caption{Timings for the training and evaluation %steps.}% for each algorithm / integration method.}
%    \label{tab:timings_cfm}
%\end{table} 

%%%%%%%%%%%%%%%%%%%%%%%%%%%%%%%%%%%%%%%%%%%%%%%%%%%%%%%%%%%%

%\newpage
%\input{sections/checklist.tex}

\end{document}